\tikzset{outer sep=0.1pt, inner sep=0,
	node/.style={rectangle, draw=black, minimum width=42, minimum height=20, outer sep=1.0pt, inner sep=0},
	every picture/.style={line width=0.75pt}}
\icmltitlerunning{Gaussian Processes with Signature Covariances}
\newcommand{\bbE}{\mathbb{E}}
\newcommand{\E}{\mathbb{E}}
\newcommand{\bbN}{\mathbb{N}}
\newcommand{\bbR}{\mathbb{R}}
\newcommand{\R}{\mathbb{R}}
\newcommand{\GP}{\mathcal{GP}}
\newcommand{\cH}{\mathcal{H}}
\newcommand{\cX}{\mathcal{X}}
\newcommand{\kernel}{\operatorname{k}}
\newcommand{\lattice}[2]{\cX_{lattice}^{{#1},{#2}}}
\newcommand{\KL}[2]{D_{KL}\left[#1 \;\|\; #2\right]}
\newcommand{\p}{\prime}
\newcommand{\bx}{\mathbf{x}}
\newcommand{\bi}{\mathbf{i}}
\newcommand{\bj}{\mathbf{j}}
\newcommand{\by}{\mathbf{y}}
\newcommand{\bz}{\mathbf{z}}
\newcommand{\bX}{\mathbf{X}}
\newcommand{\bZ}{\mathbf{Z}}
\newcommand{\TA}[1]{\prod_{m \ge 0} {#1}^{\otimes m}}
\newtheorem{theorem}{Theorem}
\newtheorem{proposition}{Proposition}
\newtheorem{definition}{Definition}
\begin{document}
% \begin{bibunit}
	\twocolumn[
	\icmltitle{Bayesian Learning from Sequential Data using \\ Gaussian Processes with Signature Covariances}
	
	% It is OKAY to include author information, even for blind
	% submissions: the style file will automatically remove it for you
	% unless you've provided the [accepted] option to the icml2020
	% package.
	
	% List of affiliations: The first argument should be a (short)
	% identifier you will use later to specify author affiliations
	% Academic affiliations should list Department, University, City, Region, Country
	% Industry affiliations should list Company, City, Region, Country
	
	% You can specify symbols, otherwise they are numbered in order.
	% Ideally, you should not use this facility. Affiliations will be numbered
	% in order of appearance and this is the preferred way.
	\icmlsetsymbol{equal}{*}
	
	\begin{icmlauthorlist}
		\icmlauthor{Csaba Toth}{to}
		\icmlauthor{Harald Oberhauser}{to}
	\end{icmlauthorlist}
	
	\icmlaffiliation{to}{Mathematical Institute, University of Oxford, Oxford, United Kingdom}
	
	\icmlcorrespondingauthor{Csaba Toth}{csaba.toth@maths.ox.ac.uk}
	\icmlcorrespondingauthor{Harald Oberhauser}{harald.oberhauser@maths.ox.ac.uk}
	
	% You may provide any keywords that you
	% find helpful for describing your paper; these are used to populate
	% the "keywords" metadata in the PDF but will not be shown in the document
	\icmlkeywords{time series, Gaussian Processes, Signatures, Bayesian Machine Learning, Deep Learning, ICML}
	
	\vskip 0.3in
	]
	
	% this must go after the closing bracket ] following \twocolumn[ ...
	
	% This command actually creates the footnote in the first column
	% listing the affiliations and the copyright notice.
	% The command takes one argument, which is text to display at the start of the footnote.
	% The \icmlEqualContribution command is standard text for equal contribution.
	% Remove it (just {}) if you do not need this facility.
	
	\printAffiliationsAndNotice{}  % leave blank if no need to mention equal contribution
	% \printAffiliationsAndNotice{\icmlEqualContribution} % otherwise use the standard text.
	
	\begin{abstract}       
		We develop a Bayesian approach to learning from sequential data by using Gaussian processes (GPs) with so-called signature kernels as covariance functions.
    This allows to make sequences of different length comparable and to rely on strong theoretical results from stochastic analysis.
    Signatures capture sequential structure with tensors that can scale unfavourably in sequence length and state space dimension.
    To deal with this, we introduce a sparse variational approach with inducing tensors. 
    We then combine the resulting GP with LSTMs and GRUs to build larger models that leverage the strengths of each of these approaches and benchmark the resulting GPs on multivariate time series (TS) classification datasets.
    	\end{abstract}
	
	\section{Introduction} \label{section:introduction}
	The evolution of some state variable, parameter or object gives naturally gives rise to \textit{sequential data}, which is defined by having a notion of order on the incoming information.
  The ordering relation, or \textit{index set} does not have to represent physical time, but for simplicity we will call it as such.
  For example, besides time series, sources of sequential data are text \cite{Pennington2014Glove}, DNA \cite{Heather2016DNA}, or even topological data analysis \cite{chevyrev_nanda_oberhauser_2018}.
  This ubiquity of sequential data has received special attention by the machine learning community in recent years.
  This paper is motivated by the following three approaches:
	% Some challenges in learning from sequential data in the context of machine learning are the following:

	% \begin{description}
	% 	\item[high-dimensionality] The evolution of an underlying system is often only described by a large number of variables interacting over very long time-periods.
		
	% 	\item[asynchronous sampling] The time-points at which distinct variables are observed can vary from instance to instance, and even the sampling frequency can change between different periods in time.  
		
	% 	\item[parametrization (ir-)relevance] It is often only the traced out trajectory that matters, not the speed at which it is traversed, e.g.~when travelling only the route taken between two points, and not the time of arrival.
		
	% 	\item[multi-modality]  Different modes of activity can occur in space and time: in space, variables might evolve at different speed and scales; in time, different time periods can be governed by distinct regimes.
	% \end{description}
	
 \paragraph{Deep learning approaches.} 
  Deep learning approaches, such as the celebrated LSTM network \cite{hochreiter1997long}, other forms of RNNs \cite{Cho2014} and convolutional networks have successfully been applied to a variety of tasks involving sequential data \cite{Sutskever2014Seq2Seq, Oord2016Wavenet}. Deep learning models can approximate any continuous function, but the cost is a large number of parameters, high variance and poor interpretability.
  This leaves the door open for alternative approaches not only as competitors, but as complementary building blocks in a larger model.
	
  \paragraph{Bayesian approaches.}
  Often not only point predictions, but estimates of the associated uncertainties are required \cite{Ghahramani2013Bayesian}.
  GPs \cite{Rasmussen2006Gaussian} provide flexible priors over functions of the data in nonparametric Bayesian models.
  % scale their complexity with the dataset size \cite{Rasmussen2001Occam}.
  In the context of sequential data, two prominent ways to use GPs are: \begin{enumerate*}[label=(\arabic*)] \item using as covariance functions kernels specifically designed for sequences~\cite{lodhi2002text, Cuturi2011GA, Cuturi2011AR, AlShedivat2017Recurrent}, \item  modelling the evolution in a latent space, that emits the observations, as a discrete dynamical system with a GP prior on the transition function, a model called the Gaussian Process State Space Model (GPSSM) \cite{Frigola2013MCMC, Frigola2014Variational, Mattos2016Recurrent, Eleftheriadis2017Identification, Doerr2018Proba, Ialongo2019Overcoming}. \end{enumerate*}
  These two approaches are not mutually exclusive; if one models the latent system as a higher order Markov process, then sequence kernels can incorporate the effect of past states.
	
  \paragraph{Signature approaches.}
  The \textit{signature feature map} is a well-developed tool from stochastic analysis that represents a sequence as an element in a linear space of tensors, \cite{chen-58, Lyons2007Differential}.
  While not a mainstream machine learning approach, it is gaining attention since it can represent \emph{non-linear} functions of sequences as \emph{linear} functions of signature features, and can be made invariant to parametrization similar to dynamic time warping (DTW).
  For example,~\cite{Kidger2019DeepSig} use them as layer in a deep learning architecture;~\cite{KiralyOberhauser2019KSig} introduce kernels for sequences by taking inner products of signature features;
  \cite{ChevyrevOberhauser18} use them for maximum mean discrepancies between laws of stochastic processes. In particular, if $\kappa: \R^d \times \R^d \rightarrow \R$ is a kernel for vector-valued data, then \cite{KiralyOberhauser2019KSig} uses signatures to derive the following kernel
  \begin{align*}
      \kernel(\bx, \by) = \sum_{m=0}^M \sigma_m^2 \sum_{\bi_m, \bj_m} c(\bi_m) c(\bj_m) \prod_{l=1}^m \Delta_{i_l, j_l} \kappa(\bx_{i_l}, \by_{j_l}), 
  \end{align*}
  for two sequences $\bx = (\bx_i)_{i=1}^{l_\bx}$ and $\by = (\by_j)_{j=1}^{l_\by}$, with the double difference operator defined as $\Delta_{i, j} \kappa(\bx_i, \by_j) := \kappa(\bx_{i+1}, \by_{j+1}) - \kappa(\bx_{i}, \by_{j+1}) - \kappa(\bx_{i+1}, \by_{j}) + \kappa(\bx_{i}, \by_{j})$, and the sums are taken over multi-indices $\bi_m=(i_1,\ldots,i_m)$ with $1 \leq i_1 \le \cdots \le i_m < l_\bx$ (and analogous for $\bj_m$) and some explicitly computable coefficients $c(\bi) \in [0, 1]$.

	\paragraph{Our contribution.}
  In principle, one can just use the signature kernel and algorithms from~\cite{KiralyOberhauser2019KSig} as covariance to define a GP for sequential data. 
  However, the computational complexity becomes quickly prohibitive and the low-rank approximation too crude, which ultimately does not lead to competitive results on many TS benchmarks.
  We therefore develop a different approach to signature covariances that builds on two recent advances in GP inference, namely variational inference \cite{Titsias2009Variational, Hensman2015Scalable, matthews2016sparse} and inter-domain inducing points~\cite{lazaro2009inter} to alleviate the computational burden.
  In particular, we show that one can use sparse tensors as inter-domain inducing points by optimizing a variational bound.
  Moreover, we use this GP as a building block in combination with RNNs to build models that combine the strenghts of these different tools. 
  This results in scalable inference algorithms and we use this to benchmark on standard TS datasets \begin{enumerate*}[label=(\roman*)] \item against popular non-Bayesian time series classifiers purely in terms of accuracy, \item against alternative Bayesian models by comparing the calibration of uncertainties for predictions \end{enumerate*}.
    Code and benchmarks are publically available at \url{http://github.com/tgcsaba/GPSig}.

\section{Background and Notation}\label{sec:background}
Given data $(\bX, Y)$ consisting of $n_\bX$ inputs $\bX = (\bx_1,\ldots,\bx_{n_\bX}) \subset \cX$ with labels $Y=(y_1,\ldots,y_{n_\bX}) \in \R$, a common Bayesian approach is to put a prior on a set of functions $\{f | f: \cX \rightarrow \R\}$, update this prior by conditioning on $(\bX, Y)$, and then use the resulting posterior to make inference about the label $y_\star$ of an unseen point $\bx_\star$. 
When this is done with Gaussian priors, the central object is a GP $f=(f_\bx)_{\bx \in \cX}$ which is specified by mean and covariance function.
Throughout we interchangibly use the notation $f_\bx$ and $f(\bx)$.
Below we recall how covariances can be constructed from feature maps and discuss the case when $\cX$ is a space of sequences of arbitrary length.

\paragraph{The feature space view.}
Given a map $\varphi:\cX \hookrightarrow V$ that injects $\cX$ into a linear space $V$, a natural way to put a prior on a function class $\cX \rightarrow \R$ is to consider \emph{linear} functions of $\varphi$ as model, that is $f(\bx):= \langle \ell, \varphi(\bx) \rangle$ to model $f(\bx_i)\approx y_i$ for some ``weights`` $\ell \in V$. Uncertainty about $f$ is then specified by uncertainty about $\ell$ (and the hyperparameters of $\varphi$). 
We refer to $\varphi$ as a \emph{feature map} and to $V$ as \emph{feature space}.
% The advantage of a nonparametric approach is that it allows one to avoid explicitly computing features and weights, and to perform inference directly on the function $f$.
Throughout we assume that $f=(f_\bx)_{\bx \in \cX}$ is a centered GP and predictions about unseen points can then be made by Gaussian conditioning.    
If the task is classification where the labels $Y$ are discrete, such an approach can be still applied by using a GP $f=(f_\bx)_{\bx \in \cX}$ as \emph{nuiscance function} to put a \emph{prior on the class membership probability} by specifying $ p(y=1|\bx)=\sigma(f(\bx))$ where $\sigma$ is for example a sigmoid.

\paragraph{Polynomial features.} 
The classical example is $\cX=\R^d$ and
\begin{align}\label{eq:moments}
 \varphi(\bx):=(1,\bx, \bx^{\otimes 2}, \bx^{\otimes 3}, \ldots, \bx^{\otimes M})
\end{align}
where $\bx^{\otimes m} \in (\R^d)^{\otimes m}$ is a tensor. 
We recall background on tensors in Appendix \ref{app:tensors}. 
If we set $f(\bx)= \langle  \ell, \varphi(\bx)\rangle$ and put a centered Gaussian prior on $\ell = (\ell_1,\ldots, \ell_M)$, then by linearity of the tensor product and expectation it follows that
\begin{align}\label{eq:polynomial cov.}
\textstyle{\E[f_\bx f_\by] =1+ \sum_{m=1}^M \langle \Sigma_m^{2}, \bx^{\otimes m} \otimes \by^{\otimes m} \rangle}
\end{align}
where $\Sigma_m^2:= \E[ \ell_m \otimes \ell_m ] \in (\R^d)^{\otimes (2m)}$.
Taking $\Sigma_m^2$ to be an isotropic diagonal matrix $\sigma_m^2 \cdot I^{\otimes m}$ recovers the polynomial kernel,
\begin{align}
\textstyle{\E[f_\bx f_\by]} = \sum_{m=0}^M \sigma_m^2 \langle \bx, \by \rangle^m 
\end{align}
where we use the convention $\langle \bx,\by \rangle^0= \sigma_0^0=1$.
Many variations exist, for example other classes of polynomials, such as Hermite polynomials (the eigenfunctions of the classic RBF kernel), can increase the effectiveness, since they allow to make the associated feature expansion infinite dimensional.
However, what makes any such class of polynomials a sensible choice for $\varphi$ is that by the Stone--Weierstrass theorem, any continuous compactly supported function $\cX \rightarrow \R$ can be arbitrary well approximated as linear functions of $\varphi(\bx)$.
This approximation property often runs under the name \emph{universality} \cite{Micchelli2006Universal, sriperumbudur2011universality}.

\paragraph{Sequences as paths.}
We study the case when one observation $\bx$ is a sequence of $l_\bx$ tuples, $\bx=(x_i,t_i)_{i=1,\ldots,l_\bx}$, and each tuple $(x_i,t_i)$ specifies that at ``time'' $t_i$ a vector $x_i \in \R^d$ was measured.
We denote with $\cX_{seq}$ the set of all such sequences.
% \begin{align}
% \cX_{seq}=\{&\bx=(t_i,x_{i})_{i=1,\ldots,l_\bx}: \,(t_i,x_{i}) \in \bbN \times \R^d, \right. \\ \left. &t_1< \cdots < t_{l_\bx}, l_\bx \in \bbN\}.
% \end{align}
and emphasize that the length $l_\bx \ge 1$ is not fixed, which is a common case in real-world data.
%o follow the feature-space view we need a feature map that makes sequences of different length comparable.
We now introduce an even larger set than $\cX_{seq}$: 
sequential data evolves in discrete time but often arises by sampling a quantity that evolves in continuous time.
Thus above the set $\cX_{seq}$ of sequences lurks the larger set of finite horizon paths 
\begin{align*}
    &\cX_{paths} = \\ &\quad\{\bx \in C([0,t_\bx], \R^d): t_\bx \in \R^+, \bx(0) =0, \| \bx \|_{bv}<\infty\},
\end{align*}
which are simply continuous $\R^d$-valued functions on some bounded time-interval. 
Here $\|\bx \|_{bv}:=\sup_{0\le t_1<\cdots<t_n\le t_\bx} \sum_{i=1}^{n} |
\bx(t_{i+1})-\bx(t_i)|$ denotes the usual bounded variation norm\footnote{Our approach generalizes to much rougher paths, such as Brownian trajectories and we give details in Appendix~\ref{app:signatures}.}. 
The set $\cX_{seq}$ naturally embeds into $\cX_{paths}$ by mapping a sequence $\bx \in \cX_{seq}$ to the path that is given by linear interpolation between the points $(0,0),(t_1,x_1),\ldots,(t_{l_\bx},x_{l_\bx})$; formally $\bx \in \cX_{seq}$ maps to the element of $\cX_{paths}$ defined as
\begin{align}\label{eq: pcw linear}
\textstyle{ t \mapsto (t_{i+1} - t_{i})^{-1}\left(x_{{i}}(t_{i+1} - t) + x_{{i+1}}(t - t_{i})\right)} 
\end{align} 
for $t \in [t_i,t_{i+1})$.
Henceforth, we implicitly use this embedding, i.e.~with slight abuse of notation, given $\bx \in \cX_{seq}$ we also write $\bx \in \cX_{paths}$. 
Key to our approach, and what makes it different to classic approaches, is to define a GP indexed by the larger set $\cX_{paths}$ rather than just $\cX_{seq}$.
At first, this looks wasteful since $\cX_{paths}$ is much bigger than $\cX_{seq}$ and in practice one only has access to discrete time data (already for storage reasons).
But on a theoretical side, going from discrete time to continuous time has two big advantages:
\begin{enumerate*}[label=(\roman*)]
  \item by construction, such a GP is consistent in the high-frequency limit (that is when we sample an object evolving in continuous time at higher and higher frequencies),
  \item we can make use of well-developed theory from stochastic analysis; in particular we use so-called signature features for paths.
\end{enumerate*}
%In Section~\ref{sec:our GP} we show that signature features give rise to such a GP $(f_\bx)_{\bx \in \cX_{paths}}$.
\section{From signature features to covariances}\label{sec:our GP}
The signature feature map $\Phi$ can be seen as a generalization of the polynomial feature map $\varphi$ as defined in~\eqref{eq:moments} from the domain $\cX=\R^d$ of vectors to the domain of paths $\cX_{paths}$.
It is defined as
\begin{align}\label{eq: signature}
 \textstyle{\Phi(\bx) = (1,\int_0^{t_\bx} d\bx, \int_0^{t_\bx} d\bx^{\otimes 2}, \ldots, \int_{0}^{t_\bx} d\bx^{\otimes m})} 
\end{align}
where $\textstyle{\int_0^{t} d\bx^{\otimes (m+1)}:= \int_0^{t_\bx} \int_0^{s}d\bx^{\otimes m} \otimes d\bx(s) \in (\R^d)^{\otimes m}}$ and $\textstyle{\int_0^{t} d\bx^{\otimes 1}:= \bx(t)}$. 
Signatures are classic objects in stochastic analysis, but probably unfamiliar to researchers in ML and we provide background in Appendix~\ref{app:signatures}. Additionally, we recommend \cite{CK16} for a hands-on introduction to signature features that provides a good complement to our presentation, motivating signatures as a generalization of polynomial features to sequences.

For what follows, only three facts will be used about $\bx \mapsto \Phi(\bx)$:
\begin{enumerate*}[label=(\arabic*)]
\item
  it maps paths of different length to the same space, thus makes paths of different length comparable,  
\item functions of paths can be arbitrary well approximated by \emph{linear} functions of $\Phi(\bx)$,
\item it distinguishes paths that follow different trajectories, but not paths that only differ by parametrization).
\end{enumerate*}
These points explain why signature features $\Phi(\bx)$ are a natural generalization of polynomial features $\varphi(\bx)$: not only do they use the same feature space (sequences of tensors), they also have the same attractive properties such as being able to approximate continuous functions.
Below we discuss how to make $\bx \mapsto \Phi(\bx)$ distinguish paths with different time-parametrizations.

\begin{table}[t]
    \begin{center}
    \begin{small}
        \begin{tabular}{lrr}
        \toprule
        & Vectors & Paths \\
        \midrule
        Domain  & $ \bx \in \R^d$ & $ \bx=(\bx_t)_{t \in [0,t_\bx]} \in \cX_{paths}$\\
        Features & $\varphi(\bx)=(\bx^{\otimes m})_{m}$ & $\Phi(\bx)=(\int d\bx^{\otimes m})_{m}$\\
        Feature space & $\prod_{m}(\R^d)^{\otimes m}$ & $\prod_m(\R^d)^{\otimes m}$\\
        Functions & $f:\R^d \rightarrow \R$ & $f:\cX_{paths} \rightarrow \R$\\ 
        Covariance & $\sum_m \sigma_m^2\langle \bx,\by \rangle^m$& $\sum_m\sigma^2_m \int\int\langle d\bx_s,d\by_t\rangle$\\
        \bottomrule
        \end{tabular}
    \end{small}
    \end{center}
    \label{table:vectors_vs_paths}
    \caption{Comparison of polynomial and signature features}
\end{table}

\paragraph{Parametrization (in)variance.}
A classic empirical finding that led to DTW is that functions of sequences are to a certain degree invariant to the time parametrization: for example, different speakers pronounce words at different speeds. 
However, sometimes the parametrization matters, e.g.~for financial data.
Thus we do not only care about the set of functions 
\begin{align}
	\label{eq:tree-like invariant F}
\textstyle{\{ f:\cX_{paths} \rightarrow \R\,|\, \text{cont.~and compactly supported}\} }
\end{align}
but also about the subset of it that consist of parametrization invariant functions. 
To make this precise, we call $\bx=(x_{t})$ a \emph{reparametrization} of $\by=(y_t)$ if there exists a a smooth increasing function $\rho: [0,t_\bx] \rightarrow [0,t_\by]$ (the ``time change'') such that $x_t=y_{\rho(t)}$ for all $t$.
We call an element $f$ of~\eqref{eq:tree-like invariant F} \emph{parametrization invariant} if $f(\bx)=f(\by)$ when $\bx$ and $\by$ are reparametrizations. 
Often the function we want to learn is invariant to a bit of reparametrization but not extreme reparametrization, so we need a more nuanced way to quantify parametrization (in)variance.
Hence, what we really want is a hyperparameter $\tau \ge 0$ that signifies the degree of parametrization invariance: for $\tau=0$ all the mass of our prior should concentrate on the ``extreme case'' that is the subset of \eqref{eq:tree-like invariant F} consisting of functions that are parametrization invariant; and as $\tau$ gets increased the probability mass should spread out to functions that are sensitive to parametrization. 
This would allow to infer the degree of parametrization invariance by automatic relevance discovery (ARD). 
To accomplish this, we parametrize signature features with a parameter $\tau \ge 0$ as follows
\begin{align}
\textstyle{ \Phi_\tau(\bx):= (1,\int_0^{t_\bx} d\bx_\tau, \int_0^{t_\bx} d\bx_\tau^{\otimes 2},\ldots, \int_0^{t_\bx} d\bx_\tau^{\otimes m}) }
\end{align}
where $\bx_\tau(t):= (\tau \cdot t, \bx(t)) \in \R^{1+d}$.
This simply makes the parametrization part of the trajectory $\{\bx_\tau(t): t \in [0,t_\bx]\} \subset \R^d$ by adding an extra coordinate.
Since signatures distinguish different trajectories (but not the speed at which we run through them), it follows that for $\tau>0$,
\begin{align}
 \Phi_\tau(\bx) = \Phi_\tau(\by) \text{ if and only if } \bx=\by,
\end{align}
and for $\tau = 0$ we have, 
\begin{align}
\textstyle{\Phi_0(\bx) = \Phi_0(\by) \text{ if and only if } \bx \sim \by}
\end{align}
since the extra coordinate is ``switched off''. 
Here, $\sim$ denotes tree-like equivalence, but we invite the reader to read $\bx \sim \by$ as saying that $\bx$ is a reparametrization of $\bx$.
This is strictly speaking not true and we give the precise mathematical statement in Appendix~\ref{app:signatures}, but note that for real-world data, tree-like equivalence is synonymous with reparametrization.

\paragraph{Signature covariances.}
%By the above, we can approximate functions of paths as linear functions of the signature. 
Following the feature space view, we now argue in complete analogy to the case of the classical polynomial feature map $\varphi$ for $\cX=\R^d$, and define a centered GP $f=(f_\bx)_{\bx \in \cX_{paths}}$ by putting a centered Gaussian prior on $\ell=(\ell_1,\ldots,\ell_M)$ and setting
\begin{align}
 f_\bx:= \langle \ell, \Phi_\tau(\bx) \rangle. 
\end{align}
The GP is hence fully specified by its covariance function 
\begin{align}\label{eq:signature covariance}
\textstyle{  k(\bx, \by)= \sum_{m=0}^M\langle \Sigma_m^2, \int d\bx_\tau^{\otimes m} \otimes \int d\by_\tau^{\otimes m} \rangle}
\end{align}
that has $(\tau, M, \Sigma_1^2,\ldots,\Sigma_M^2)$ as hyperparameters.
In particular, choosing an isotropic covariance structure for $\Sigma_m^2$ gives
\begin{align} \label{eq:isotropic_sig_cov}
\textstyle{k(\bx, \by) = \sum_{m=0}^M \int \int \sigma_m^2 \langle d\bx_\tau, d\by_\tau \rangle^m,}
\end{align}
where the integrations are over the simplices $0 < t_1 < \dots < t_m < t_{\bx}$ and $0 < s_1 < \dots < s_m < t_{\by}$. Furthermore, for the special case of paths that arise as linear interpolation of sequences $\bx, \by \in \cX_{seq}$, this covariance reduces to iterated sums of increments
\begin{align} \label{eq:discrete_sig_cov}
\textstyle{	k(\bx, \by) = \sum_{m=0}^M \sum_{\bi_m, \bj_m} \sigma_m^2 c(\bi_m, \bj_m) \prod_{l=1}^m \langle \Delta \bx_{i_l}, \Delta \by_{j_l} \rangle}
\end{align}
for some explicitly known constants $c(\bi_m, \bj_m)$, where the inner sums are over all $m$-tuples $\bi_m = (i_1, \dots i_m)$ and $\bj_m = (j_1, \dots, j_m)$ with $i_1 \leq \dots \leq i_m$ and $j_1 \leq \dots \leq j_m$, and the convention that the empty summation is equal to $1$.
%Note that taking $c(\bi_m, \bj_m) = 1$ if there are no repeating indices in $\bi_m$ and $\bj_m$, and otherwise to $0$ gives a good approximation \cite{KiralyOberhauser2019KSig}.

\paragraph{GP regularity.}
One expects that the GP with covariance \eqref{eq:signature covariance} has nice regularity properties, however, the index set $\cX_{paths}$ is a very large space so some care is needed. 
In Appendix~\ref{app:gpsig}, we compute covering numbers that yield explicit bounds on the modulus of continuity in terms of the path path length $\|\bx\|_{bv}$. 
\begin{theorem} \label{thm:continuity}
  Let $L > 0$ and $\cX_{paths}^L:=\{\bx \in \cX_{paths}: \| \bx \|_{bv} \le L\}$.
  There exists a centered GP $f=(f_\bx)_{\bx \in \cX^L_{paths}}$ with $k(\bx,\by)$ as defined in~\eqref{eq:signature covariance} as covariance function and that has continuous sample paths $\bx \mapsto f_\bx$.
  Further, an explicit bound on its modulus of continuity in terms of $L$ is given in equation~\eqref{eq:modulus}. 
\end{theorem}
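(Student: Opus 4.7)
}
The plan is to proceed in three steps: establish that $k$ is a legitimate covariance, invoke Kolmogorov's extension theorem to produce the GP, and then upgrade to a continuous modification via a Gaussian chaining argument whose constants track explicitly through $L$.

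First, I would verify that $k$ as defined in \eqref{eq:signature covariance} is positive semidefinite. The cleanest route is to rewrite
$k(\bx,\by)=\E\big[\langle \ell,\Phi_\tau(\bx)\rangle\,\langle \ell,\Phi_\tau(\by)\rangle\big]$
where $\ell=(\ell_m)_{m\le M}$ is a centered Gaussian element of $\bigoplus_{m\le M}(\R^{1+d})^{\otimes m}$ with covariance operators $\Sigma_m^2$; this exhibits $k$ as a Gram matrix of Gaussian functionals and hence as positive semidefinite. Applied to any finite $(\bx_1,\ldots,\bx_n)\subset\cX_{paths}^L$, the Kolmogorov extension theorem then yields a centered GP $f=(f_\bx)_{\bx\in\cX_{paths}^L}$ with covariance $k$.

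To obtain continuity, I would study the intrinsic pseudo-metric
$d_f(\bx,\by)^2 := \E\big[(f_\bx-f_\by)^2\big] = k(\bx,\bx)+k(\by,\by)-2k(\bx,\by)$,
and bound it by Cauchy--Schwarz in the feature space:
$d_f(\bx,\by)\le C_\Sigma\cdot\|\Phi_\tau(\bx)-\Phi_\tau(\by)\|$,
where $C_\Sigma$ is determined by the operator norms of $\Sigma_1^2,\ldots,\Sigma_M^2$. The next ingredient is the classical factorial estimate for signatures of BV paths, which controls each level by $\|\bx\|_{bv}^m/m!$ and, via a standard telescoping argument, gives a Lipschitz-type bound
$\|\Phi_\tau(\bx)-\Phi_\tau(\by)\|\le C(L,\tau,M)\cdot\|\bx-\by\|_{bv}$
on $\cX_{paths}^L$, with the factorial decay ensuring the series truncated or not is summable. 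Combined with the previous inequality, this reduces continuity of $f$ in $d_f$ to continuity in a tractable path metric.

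The last and most delicate step is to turn the bound on $d_f$ into sample path continuity with an explicit modulus. I would invoke Dudley's entropy bound: any centered Gaussian process whose intrinsic metric has finite entropy integral admits a continuous modification, with modulus controlled by $\int_0^\delta\!\sqrt{\log N(\cX_{paths}^L,d_f,\epsilon)}\,d\epsilon$. The covering numbers in $d_f$ are dominated, via the Lipschitz estimate above, by covering numbers of $\cX_{paths}^L$ in a metric in which BV-balls are actually totally bounded (e.g.\ the uniform norm, where Helly-type compactness applies); the explicit covering estimates computed in Appendix~\ref{app:gpsig} are then plugged into Dudley's integral to yield both continuity and the modulus stated in \eqref{eq:modulus}.

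The main obstacle is the third step: $\cX_{paths}^L$ is not totally bounded in the BV norm itself, so the proof hinges on choosing a weaker auxiliary metric in which the BV-ball has manageable entropy while still dominating $d_f$ through the signature Lipschitz bound. Keeping the dependence on $L$, $\tau$ and $M$ explicit through every estimate (the factorial constant $C(L,\tau,M)$, the covering exponent, and the entropy integral) is what ultimately produces the quantitative modulus of continuity.
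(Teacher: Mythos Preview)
Your overall strategy --- existence via positive-definiteness plus Kolmogorov, then Dudley's entropy integral for a continuous modification --- matches the paper. The divergence is entirely in how you obtain the covering number of $\cX_{paths}^L$ under $d_f$, and there your plan has a gap.

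You derive a Lipschitz bound $\|\Phi_\tau(\bx)-\Phi_\tau(\by)\|\le C(L,\tau,M)\,\|\bx-\by\|_{bv}$ and then propose to dominate $d_f$-covering numbers by covering numbers in the uniform norm, citing Helly-type compactness. These two steps do not connect: the Lipschitz estimate you wrote is in the BV norm, so it only transfers $d_f$-entropy to BV-entropy --- and as you yourself note, BV balls are not totally bounded in the BV norm. To pass to uniform-norm entropy you would need a Lipschitz (or H\"older) bound for the signature in the \emph{uniform} norm on BV-bounded sets; such an estimate exists, but it is a different inequality with different constants, and you have not supplied it. Even granting that, extracting an \emph{explicit} covering number for $\{\|\bx\|_{bv}\le L\}$ in sup norm, over variable horizons $[0,t_\bx]$, from Helly-type compactness is not straightforward and is unlikely to yield the clean form of~\eqref{eq:modulus}.

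The paper sidesteps the auxiliary-metric problem entirely: it builds the $\epsilon$-net directly in $d_k$ using the lattice-path approximation of~\cite{lyons2011inversion}. For every $\bx\in\cX_{paths}^L$ and every $n\ge 1$ there is a lattice path $\by$ with step size $L2^{-n}$ such that $\|\int d\bx^{\otimes m}-\int d\by^{\otimes m}\|\le d\,2^{-(n-1)}\cdot 4L^{m-1}/(m-1)!$ for all $m$. The set of such lattice paths of total length $\le L$ is finite, of cardinality at most $(2^d+1)^{L2^n}$, and choosing $n=n(\epsilon)$ appropriately gives $\log_2 N(\epsilon,L)\le 2(d+1)L\sqrt{M}/\epsilon$. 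Plugging this into Dudley (in the form of Theorem~2.7.1 in~\cite{khoshnevisan2002multiparameter}) gives both the continuous modification and the explicit modulus~\eqref{eq:modulus}. This finite lattice-path cover is the key technical device you are missing; it replaces the search for a ``weaker auxiliary metric'' altogether.
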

We now have a well-defined GP for Bayesian inference for sequences at hand that inherits many of the attractive properties of signature features. 
To turn this into useful models for large TS benchmarks we develop efficient inference algorithms in the next section.

%$$Unfortunately, it does not scale for many TS benchmarks: computing $\int d\bx^{\otimes m}$ directly requires $O(t_\bx d^M)$ steps and $O(d^M)$ memory; the low-rank algorithms from~\cite{KiralyOberhauser2019KSig} reduce the 
		\section{Sparse variational inducing tensors}\label{sec:sparse var tensor}
To reiterate, we are given data $(\bX, Y)$ consisting of $n_\bX$ sequences $\bX = (\bx_1,\ldots,\bx_{n_\bX})$ of maximal length $l_\bX:=\max_{\bx \in \bX}l_\bx \subset \cX_{seq}$ that evolve in $\R^d$ with labels $Y=(y_1,\ldots,y_{n_\bX})$, and the task is to predict labels $y_\star$ of unseen points $\bx_\star$.
%	Our GP $f=(f_\bx)_{\bx \in \cX_{paths}}$ can be used as {nuiscance function} to put a {prior on the class membership probability} by identify sequences as paths and specifying $ p(y=1|\bx)=\sigma(f(\bx))$ where $\sigma$ is for example a sigmoid.
	%$ y | f(\bx) \sim \operatorname{Ber}(\sigma(f(\bx)))$ where $f \sim \GP(m,k)$ is a GP model and $\sigma : \bbR \rightarrow [0,1]$ is a non-linear function.
For sequential data, the sample size $n_\bX$ and associated covariance matrix inversion is not the only compational bottleneck but also the maximal length of sequences $l_\bX$, and the dimension $d$ of the state space matter: $n_\bX$, $l_\bX$ and $d$ can be simultaneously large. 

In this section, we introduce a sparse inference scheme to approximate the posterior of our GP, that locates the inducing points in a space other than the data-domain; an approach that is usually coined the term \textit{inter-domain} sparse variational inference \cite{lazaro2009inter, matthews2016sparse}. This allows for more efficient data-representation and faster inference. 
Key to our approach is that signature features take values in a well-understood subset of the feature space $\prod_{m \ge 0} (\R^d)^{\otimes m}$.
This allows as us to augment the index set with structured tensors, and locate inducing points in this larger index set.

    \paragraph{Variational inference.}
	As is well-known, inference for GPs scales as $O(n_\bX^3)$, see Section 3.3.~in~\cite{Rasmussen2006Gaussian}.
	This first led to \emph{sparse} models, \cite{quinonero2005unifying}, that select a subset $\bZ=\{\bz_1,\ldots,\bz_{n_\bZ}\}$ of $\bX$ consisting of $n_\bZ\ll n_\bX$ points, and subsequently to \emph{pseudo-inputs}, \cite{snelson2006sparse}, that select points $\bZ$ that are not necessarily in $\bX$.
	This was a big step towards complexity reduction, but pseudo-inputs are prone to overfitting, \cite{MatthewsDPhil}.
	A different idea is to treat $\bZ$ as parameters of a variational approximation \cite{Titsias2009Variational} and not as model parameters; that is the points $\bZ$ are choosen simultaneously with the hyperparameters of the GP by maximising a lower bound on the log-marginal likelhood $\log p(Y)$, the so-called \emph{evidence lower bound} (ELBO), given as
	\begin{align} \label{eq:elbo}
		\log p(Y) \geq \E_{q(f_\bX)}[\log p(Y \vert f_\bX)] - \KL{q(f_\bZ)}{p(f_\bZ)},
	\end{align}
	where $f_\bX$ and $f_\bZ$ denotes the GP evaluated at the data-points and the inducing locations. Typically, $q(f_\bZ)$ is given a free-form multivariate Gaussian to be learnt from the data, and then extended to other indices of the GP by \textit{prior conditional matching}, i.e. $q(f_\bX \vert f_\bZ) = p(f_\bX \vert f_\bZ)$.	Initially applied to regression, this was extended to classification~\cite{chai2012variational, Hensman2015Scalable}.
	Among its advantages are that it gives a nonparametric approximation to the true posterior, adding inducing points only improves the approximation, and any optimization method can be used to maximize the ELBO, most importantly, stochastic optimization; see \cite{Hensman2013GaussianPF, bauer2016understanding,bui2016unifying}.

%The prominent approximate inference scheme is variational inference \cite{Blei2017Variational} with the approximate posterior given by a sparse variational Gaussian process (SVGP) %\cite{Titsias2009Variational, Hensman2015Scalable, matthews2016sparse}. The SVGP is characterized by a finite-set of inducing features with a joint free-form multivariate Gaussian distributions, and extended to other indices by prior conditional matching \cite{MatthewsDPhil}. The inducing features need not correspond to point-evaluations of the GP, e.g. \cite{lazaro2009inter, Hensman2016Fourier} use Fourier features, \cite{Wilk2017ConvGP} uses inducing patches.
\paragraph{Inter-domain approaches.}Another idea is to go beyond the original index set and place inducing points $\bZ$ in a different space $\cX'$, that is, given a centered GP $g=(g_\bx )_{\bx \in \cX}$ one augments the original index set $\cX$ by a set $\cX'$ to define a new GP $(g_\bx)_{\bx \in \cX \cup \cX'}$ and then locates the inducing points in this bigger model.
	This was suggested in~\cite{lazaro2009inter} in the context of integral transforms, which was extended in~\cite{Hensman2016Fourier}, and studied in more generality in~\cite{matthews2016sparse}.
	In general, it is not obvious how to find a useful augmentation set $\cX'$ and define the covariance enlarged to $\cX\cup \cX'$.
  \subsection{A sparse variational tensor augmentation.}
	Given any GP with a covariance function $\kernel(\bx,\by):=\langle \Phi(\bx), \Phi(\by) \rangle$ where $\Phi$ is explicitly known\footnote{Mercer's Theorem guarantees the existence of $\Phi$, but not in a sufficiently explicit form.}, we propose that a natural augmentation candidate is the ``feature space'' $\cX':=\operatorname{span}\{\Phi(\bx): \bx \in \cX\}$ itself.
	The covariance function $\kernel$ of $g$ can be simply extended to $\cX\cup \cX'$ by linearity,
  \begin{align}\label{eq:augment}
   \kernel(\bx,\bz):=\kernel(\bz,\bx):=\alpha \kernel(\bx, \bx') + \beta \kernel(\bx, \bx'') 
  \end{align}
   for $\bx \in \cX$, $ \bz=\alpha \Phi(\bx')+\beta \Phi(\bx'') \in \cX'$, $\alpha,\beta \in \bbR$; analogous for $\kernel(\bz,\bz')$ with $\bz,\bz' \in \cX'$. 
For our GP, \[\textstyle{\cX'=\operatorname{span}\{\Phi_{\tau}(\bx):\bx \in \cX_{paths}\} \subset \TA{V}}\] where we denote $V:=\R^d$. 
	%Moreover, an explicit algebraic characterisation of $\{\Phi(\bx): \bx \in \Seq{\cX}\}$ as a subset of $\TA{W}$ is known,~\cite{reutenauer-93} .
We can thus extend our signature covariance~\eqref{eq:signature covariance} to $\cX_{seq} \cup \TA{V}$ by~\eqref{eq:augment}. 
This provides a flexible class of inducing point locations $\bZ$ by optimizing over elements of the tensor algebra $ \bZ \subset \TA{V}$.
We coin these inducing point locations as \textit{inducing tensors}.
	
	\paragraph{Consistency of augmentation.}	
	A subtle point about augmenting the index set is that maximizing the ELBO in \eqref{eq:elbo} is not necessarily equivalent anymore to minimizing a rigorously defined KL divergence between the true posterior process and its approximation over the unaugmented index set. In \cite{matthews2016sparse}, a sufficient condition given for this to hold is that the prior GP evaluated at the newly added indices is deterministic conditioned on the original GP.
  In the case of~\eqref{eq:augment}, this is easily seen to be true, since the augmented indices arise as linear combinations of elements in the original index set. 
Therefore, the corresponding GP evaluations arise as linear combinations of evaluations of the original process by the fact that the feature space $\cX^\p$ is a \textit{representation} of the \textit{Hilbert space generated by the process} \cite{berlinet2003reproducing}.
	
	\paragraph{Representation of inducing tensors.}
	We define our sparse inducing tensors as 
	\[ \label{eq:sparse_tens}
	\textstyle{\bz = (z_m)_{m=0,\ldots,M} \in \prod_{m=0}^M V^{\otimes m}},
	\]
  where $z_0 \in \bbR$ and $z_m = v_{m,1} \otimes v_{m,2} \otimes \cdots \otimes v_{m,m}$ for ${m \geq 1}$.
  We remark that this construction does not generally give tensors that can be signatures of paths.
  However, they can be represented as linear combinations of signatures, hence the previous argument about the augmentation carries over. Also, informally, what gives the data-efficiency of inducing tensors is exactly that they are not represented in a basis of signatures, but as sparse tensors.
	%which leads to a compact representation of the data independent of path-space.

By linearity of integration and the inner product, the inducing point covariance $\bbE[f_{\bz}f_{\bz'}]$ equals the inner product 
	\begin{align} \label{eq:inducing_cov_n}
\textstyle{ \sum_{m=0}^M \sigma_m^2 \langle z_m, z_m^\p \rangle = \sum_{m=0}^M \sigma_m^2 \prod_{k=1}^m\langle v_{m,k}, v_{m, k}^\p \rangle},
	\end{align}
%\langle w_{n,1}, w_{n, 1}' \rangle \langle w_{n,2}, w_{n, 2}' \rangle \cdots \langle w_{n,n}, w_{n,n}' \rangle.
the cross-covariance $\bbE[f_{\bx}f_{\bz}] =\sum_{m=0}^M \sigma_m^2 \langle \Phi_m (\bx), z_m \rangle$,
	\begin{align}\label{eq: cross_cov}
	% &\langle \int_{0 < t_1 < t_2 < \dots < t_n < t_{l_\bx}} d\varphi(x_{t_1}) \otimes \cdots \otimes d\varphi(x_{t_n}), w_{n,1} \otimes  \cdots \otimes w_{n,n} \rangle \\
	%     &= \int_{0 < t_1 < t_2 < \dots < t_n < T} \langle dx_{t_1} \otimes dx_{t_2} \otimes \dots \otimes dx_{t_n}, v_1 \otimes v_2 \otimes \dots \otimes v_n \rangle \\
	\textstyle{\langle \Phi_m (\bx), z_m \rangle= \int \langle dx_{t_1}, v_{m,1} \rangle \cdots \langle dx_{t_m}, v_{m, m} \rangle}
	\end{align}
where the integration is over the simplex ${0 < t_1 < \dots <  t_{l_\bx}}$.
  Finally, note that we just need to evaluate the above for piecewise linear paths since these are the only paths that arise via the embedding~\eqref{eq: pcw linear}, $\cX_{seq} \hookrightarrow \cX_{paths}$.
  For such paths, the above integrals reduce to iterated sums, hence $\langle \Phi_m(\bx), z_m \rangle$ equals
	\begin{align} \label{eq:discr_cross_cov_n}
	%= \sum_{1 \leq i_1 < i_2 < \dots < i_m < l_x}
	%  \prod_{k=1}^n \hspace{5pt} \langle x_{t_{i_k+1}} - x_{t_{i_k}}, z_k \rangle
	\sum_{\bi} c(\bi) \langle x_{{i_1+1}} - x_{i_1}, v_{m,1} \rangle  \cdots \langle x_{{i_m+1}} - x_{{i_m}},  v_{m, m} \rangle,
	\end{align}
  where the sum is taken over all $m$-tuples $\bi=(i_1,\ldots, i_m)$ of the form $1 \leq i_1 \le \cdots \le i_m \le l_\bx$ and $c(\bi)\leq 1$ is given by an explicit calculation.
  Similarly to \eqref{eq:discrete_sig_cov}, replacing $c(i_1,\ldots,i_m)$ with $1$ if there are no repeating indices in $(i_1,\ldots,i_m)$ and otherwise with $0$ gives a good approximation\footnote{It converges to $\langle  \Phi(\bx),\bz \rangle$ when the grid gets finer, $|t_{i+1}-t_i| \downarrow 0$, see \cite{ChevyrevOberhauser18} to~\eqref{eq:discr_cross_cov_n}.} .
Below we use this approximation to~\eqref{eq:discr_cross_cov_n} since it makes the recursive algorithms simpler but note that a simple modification exactly computes~\eqref{eq:discr_cross_cov_n} for a marginal computational overhead.%; a similar approximation was used in~\cite{KiralyOberhauser2019KSig} directly for signatures. 
	\subsection{Algorithms.}
	We need to compute the three covariance matrices:~\begin{enumerate*}[label=(\arabic*)] \item $K_{\bZ\bZ}$ of inducing tensors $\bZ$ and inducing tensors $\bZ$, \label{it:tens2tens} \item $K_{\bZ\bX}$ of inducing tensors $\bZ$ and sequences $\bX$, \label{it:tens2stream} \item $K_{\bX\bX}$ of sequences $\bX$ and sequences $\bX$.\label{it:stream2stream} \end{enumerate*}
	Using the above tensor representations allows to give vectorized algorithms for~\ref{it:tens2tens} and~\ref{it:tens2stream} in Algorithms~\ref{alg:inducing_cov} and \ref{alg:cross_cov}, respectively.
	For~\ref{it:stream2stream} we use a modification of Algorithm 3 from~\cite{KiralyOberhauser2019KSig} which we recall in Appendix \ref{app:stream_covs}.
	We use notation defined in Appendix~\ref{app:alg_notation}, which can be briefly summarized as: $\Sigma$ denotes the slice-wise sum operator, $\boxplus$ the (forward) cumulative sum operator, $+1$ the shift operator, and $\odot$ denotes the element-wise product of arrays.
  Additionally, we set $\Delta x_{i} := x_{i+1} - x_i$ for $i \in \{1, \dots, l_\bx -1 \}$.
  For $v, v^\p \in V$, $d$ denotes the time to compute $\langle v, v^\p \rangle$, $c$ the memory requirement of $v$. 
	
  \begin{proposition}\label{eq:complexity}
	Algorithm~\ref{alg:inducing_cov} computes the covariance matrix $K_{\bZ\bZ}$ of $n_\bZ$ inducing points in $O(M^2 \cdot n_\bZ^2 \cdot d )$ steps.
	Algorithm~\ref{alg:cross_cov} computes the cross-covariance matrix $K_{\bZ\bX}$ in $O(M^2 \cdot n_\bX \cdot n_\bZ \cdot l_\bX \cdot d)$ steps.
  Additionally to storing the inducing tensors $\bZ$, Algorithm~\ref{alg:inducing_cov} requires $O(M^2 \cdot n_\bZ^2)$ memory, Algorithm~\ref{alg:cross_cov} requires $O(M^2 \cdot n_\bX \cdot n_\bZ \cdot l_\bX)$ memory.
  \end{proposition}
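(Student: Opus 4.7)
The plan is to establish both bounds by counting arithmetic operations step-by-step against the tensor-product factorisation~\eqref{eq:inducing_cov_n} for $K_{\bZ\bZ}$ and against a dynamic program that evaluates the iterated sum~\eqref{eq:discr_cross_cov_n} for $K_{\bZ\bX}$. In both cases the operation and memory accounting is layered over (i) a precomputation of scalar inner products, (ii) a recurrence or elementwise product, (iii) a weighted sum over signature levels.

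For $K_{\bZ\bZ}$ I would begin directly from~\eqref{eq:inducing_cov_n}. Since each inducing tensor is the single rank-one term $z_m = v_{m,1} \otimes \cdots \otimes v_{m,m}$, the level-$m$ contribution to all $n_\bZ^2$ entries of $K_{\bZ\bZ}$ decomposes into the $m$ scalar inner products $\langle v_{m,k}, v_{m,k}^\p\rangle$ indexed over the $n_\bZ^2$ pairs of inducing tensors. Algorithm~\ref{alg:inducing_cov} can then be read off as (a) a batched inner-product computation for each $(m,k)$, producing an $n_\bZ \times n_\bZ$ matrix at cost $O(n_\bZ^2 \cdot d)$ and $O(n_\bZ^2)$ storage; (b) elementwise products $\odot$ across $k$; and (c) a $\sigma_m^2$-weighted slice-wise sum $\Sigma$ across $m$. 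Since there are $\sum_{m=1}^M m = O(M^2)$ pairs $(m,k)$ and stages (b) and (c) are cheaper than (a) by a factor $d$, the total operation count is $O(M^2 \cdot n_\bZ^2 \cdot d)$ and the total auxiliary memory is $O(M^2 \cdot n_\bZ^2)$ scalars, which matches the claim.

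For $K_{\bZ\bX}$ I would reinterpret~\eqref{eq:discr_cross_cov_n} (under the no-repeats approximation) as a dynamic program over the sequence index. Fixing a level $m$ and a pair $(\bx, \bz)$, set $A_0 \equiv 1$ and $A_k(i) := \sum_{1 \le i_1 < \cdots < i_k \le i} \prod_{j=1}^k \langle \Delta x_{i_j}, v_{m,j}\rangle$; splitting on whether $i_k = i$ gives the recurrence $A_k(i) = A_k(i-1) + A_{k-1}(i-1) \cdot \langle \Delta x_i, v_{m,k}\rangle$, which is exactly what the composition of the $+1$ shift and the forward cumulative-sum $\boxplus$ computes on the precomputed $(l_\bX-1) \times m$ array $\bigl(\langle \Delta x_i, v_{m,k}\rangle\bigr)_{i,k}$. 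Batched across all $n_\bX \cdot n_\bZ$ pairs, the inner-product stage at level $m$ costs $O(m \cdot n_\bX \cdot n_\bZ \cdot l_\bX \cdot d)$ while the DP stage costs $O(m \cdot n_\bX \cdot n_\bZ \cdot l_\bX)$ and shares its storage. Summing over $m \in \{1,\ldots,M\}$ yields the advertised $O(M^2 \cdot n_\bX \cdot n_\bZ \cdot l_\bX \cdot d)$ time bound and $O(M^2 \cdot n_\bX \cdot n_\bZ \cdot l_\bX)$ memory bound.

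I expect the only non-mechanical step to be verifying correctness of the DP, namely that the recurrence actually evaluates the strictly-ordered iterated sum rather than its unrestricted $O(l_\bX^m)$ counterpart, and that the resulting operation count remains $O(m \cdot l_\bX)$ rather than exponential in $m$; this is a one-line induction on $k$ using the splitting above. Everything else is a routine accounting of the $\Sigma$, $\odot$, $\boxplus$ and $+1$ primitives against their input sizes, so the proposition follows.
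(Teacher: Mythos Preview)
Your proposal is correct and is essentially the spelled-out version of what the paper does: the paper's entire proof is the sentence ``Proposition~\ref{eq:complexity} follows by inspection of the algorithms,'' and your layered accounting of the precomputation, elementwise-product/cumsum recurrence, and level-wise sum is exactly that inspection made explicit. The one part you add beyond the paper---the DP correctness argument that the $\boxplus$/$+1$ composition realises the strictly-ordered iterated sum via $A_k(i) = A_k(i-1) + A_{k-1}(i-1)\langle \Delta x_i, v_{m,k}\rangle$---is a useful sanity check but not strictly needed for the complexity claim itself.
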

  Proposition~\ref{eq:complexity} follows by inspection of the algorithms and we emphasize the following points:
  \begin{enumerate*}[label=(\roman*)]
  \item Both algorithms are linear in the maximal sequence length $l_\bX$.
  \item $M$ is a hyperparameter, and in all our experiments we learnt from the data $M \leq 5$, thus the quadratic complexity in $M$ is negligible.
  \item The memory cost of inducing tensors $\bZ$ is much less than for the data $\bX$, which is stored in $O(n_\bX \cdot l_\bX \cdot d)$ memory, which is important because the inducing tensors are variational parameters, and not amenable to subsampling, while the learning inputs can be subsampled as noted by~\cite{Hensman2013GaussianPF}.
  Especially for GPUs memory cost is decisive and such savings are very important.
  \end{enumerate*}
  
  The computation of $K_{\bX\bX}$ detailed in Appendix \ref{app:stream_covs} has time complexity $O((M + d) \cdot n_\bX^2 \cdot l_\bX^2)$ and memory of $O(d \cdot n_\bX \cdot l_\bX + n_\bX^2 \cdot l_\bX^2)$. However, given a factorizing likelihood, one only requires $K_\bX := [k(\bx, \bx)]_{\bx \in \bX}$, which eliminates the quadratic cost in $n_\bX$.
  It turns out that this is enough to train on GPUs with reasonable minibatch sizes (e.g. $n_\bX = 50$) on several real world datasets. We remark that the low-rank algorithms in \cite{KiralyOberhauser2019KSig} allow to trade off accuracy for linear cost in $l_\bX$, but we found that using the full-rank algorithm performs much better, and the above will allow us to apply it to several datasets with great results. Finally, note that the ELBO \eqref{eq:elbo} requires an additional matrix inversion and multiplication in $O(n_\bZ^2 \cdot n_\bX + n_\bZ^3)$ time, which is not significant in our case.
	
\begin{algorithm}[tb]
		\caption{Computing the inducing covariances $K_{\bZ\bZ}$}
		\label{alg:inducing_cov}
		\begin{algorithmic}[1]
			\STATE {\bfseries Input:} Tensors $\bZ=(\bz_i)_{i=1,\dots,n_\bZ} \subset \prod_{n=0}^m V^{\otimes n}$, \\ scalars $(\sigma^2_0, \sigma^2_1, \dots, \sigma^2_m)$, depth $m \in \bbN$ 
			\STATE Compute $K[i, j, n, k] \gets \langle v^i_{n, k}, v^j_{n, k} \rangle$ for $i, j \in \{1,\dots,n_\bZ\}$, $n \in \{1,\dots,m\}$ and $k \in \{1,\dots,n\}$ \\
			\STATE Initialize $R[i, j] \gets \sigma_0^2$ for $i, j \in \{1,\dots,n_{\bZ}\}$
			\FOR{$n=1$ {\bfseries to} $m$}
			\STATE Assign $A \gets K[:, :, n, 1]$
			\FOR{$k=2$ {\bfseries to} $n$}
			\STATE Iterate $A \gets K[:, :, n, k] \odot A$
			\ENDFOR
			\STATE Update $R \gets R + \sigma_n^2 \cdot A$
			\ENDFOR
			\STATE {\bfseries Output:} Matrix of inducing covariances $K_{\bZ\bZ} \gets R$
		\end{algorithmic}
	\end{algorithm}
	
	\begin{algorithm}[t]
		\caption{Computing the cross-covariances $K_{\bZ\bX}$}
		\label{alg:cross_cov}
		\begin{algorithmic}[1]
			\STATE {\bfseries Input:} Tensors $\bZ=(\bz_i)_{i=1,\dots,n_\bZ} \subset \prod_{n=0}^m V^{\otimes n}$, \\ sequences $\bX=(\bx_i)_{i=1,\dots,n_{\bX}} \subset \cX_{seq}$, \\ scalars $(\sigma^2_0, \sigma^2_1, \dots, \sigma^2_m)$, depth $M \in \bbN$ 
			\STATE Compute $K[i, j, l, m, k] \gets \langle v^i_{m, k}, \Delta x_{j, t_l} \rangle$ for $i \in \{1,\dots,n_\bZ\}$, $j \in \{1,\dots,n_\bX\}$, $l \in \{1, \dots, l_\bX -1 \}$, $m \in \{1,\dots,M\}$ and $k \in \{1, \dots, m\}$ \\
			\STATE Initialize $R[i, j] \gets \sigma_0^2$ for $i \in \{1,\dots,n_{\bZ}\}$, $j \in \{1,\dots,n_{\bX}\}$
			\FOR{$m=1$ {\bfseries to} $M$}
			\STATE Assign $A \gets K[:, :, n, 1]$
			\FOR{$k=2$ {\bfseries to} $m$}
			\STATE Iterate $A \gets K[:, :, :, m, k] \odot A[:, :, \boxplus+1]$
			\ENDFOR
			\STATE Update $R \gets R + \sigma_m^2 \cdot A[:, :, \Sigma]$
			\ENDFOR
			\STATE {\bfseries Output:} Matrix of cross-covariances $K_{\bZ\bX} \gets R$
		\end{algorithmic}
	\end{algorithm}
	
\begin{table*}[t]
	\caption{Average ranks of GPs on datasets \cite{baydogan2015multivarate} with the 1\textsuperscript{st} and 2\textsuperscript{nd} best in bold and italicized for each row}
	\label{table:avg_ranks}
	% \vskip 0.15in
	\begin{center}
		\begin{small}
			\begin{sc}
				\begin{tabular}{lrrrrrr}%{lllll}
					\toprule
					 & GP-Sig-LSTM & GP-Sig-GRU & GP-Sig & GP-LSTM & GP-GRU & GP-KConv1D \\
					\midrule
					Mean rank (nlpp, $n_\bX < 300$) & $\mathit{2.80}$ & $2.90$ & $\mathbf{2.20}$ & $4.70$ & $4.00$ & $4.40$ \\
					Mean rank (acc., $n_\bX < 300$) & $\mathit{3.00}$ & $3.10$ & $\mathbf{2.80}$ & $4.25$ & $4.25$ & $3.60$ \\
					Mean rank (nlpp, $n_\bX \geq 300$) & $\mathbf{2.33}$ & $3.33$ & $\mathit{2.83}$ & $4.83$ & $4.33$ & $3.33$ \\
					Mean rank (acc., $n_\bX \geq 300$) & $\mathbf{2.17}$ & $3.50$ & $\mathit{3.00}$ & $4.17$ & $4.33$ & $3.83$ \\
					Mean rank (nlpp, all) & $\mathit{2.63}$ & $3.06$ & $\mathbf{2.44}$ & $4.75$ & $4.13$ & $4.00$ \\
					Mean rank (acc., all) & $\mathbf{2.69}$ & $3.25$ & $\mathit{2.88}$ & $4.22$ & $4.28$ & $3.69$ \\
					\bottomrule
				\end{tabular}
			\end{sc}
		\end{small}
	\end{center}
\end{table*}

\begin{figure*}
\begin{minipage}{0.44\textwidth}
        \centering
        \begin{tikzpicture}[shorten >=1pt,draw=black!50, node distance=\layersep]

        \node[node, text centered, shading=green] at (0.0,0.5) (x) {$\bx$};
        \node[text centered] at (0.0,1.25) {$(\bbR^d)_{seq}$};
        
        \node[node, text centered, shading=blue] at (1.8,0.5) (kx) {$\kappa_{\hat\bx}$};
        \node[text centered] at (1.8, 1.25) {$V_{seq}$};
        
        \draw[->, color=black] (x) -- (kx);
        
        (4.0,0) rectangle ++(1,1);
        \node[node, text centered, shading=blue] at (3.6,0.5) (sigx) {$\Phi(\kappa_{\hat\bx})$};
        \node[text centered] at (3.6, 1.25) {$\prod_{m=0}^M V^{\otimes m}$};
        
        \draw[->, color=black] (kx) -- (sigx);
        
        \node[node, text centered, shading=red] at (5.4,0.5) (gp) {$f_{\bx}$};
        \node[text centered] at (5.4, 1.25) {$\GP$};
        
        \draw[->, color=black] (sigx) -- (gp);
        \end{tikzpicture}
        \captionof{figure}{The GP-Sig model}
        \label{fig:GPSig}     
\end{minipage}
\begin{minipage}{0.56\textwidth}
        \centering
        \begin{tikzpicture}[shorten >=1pt,draw=black!50, node distance=\layersep]

        \node[node, text centered, shading=green] at (0.0,0.5) (x) {$\bx$};
        \node[text centered] at (0.0, 1.25) {$(\bbR^d)_{seq}$};
        
        \node[node, text centered, shading=blue] at (1.8,0.5) (rnnx) {$\phi_\theta(\hat\bx)$};
        \node[text centered] at (1.8, 1.25) {$(\bbR^h)_{seq}$};
        
        \draw[->, color=black] (x) -- (rnnx);
        
        \node[node, text centered, shading=blue] at (3.6, 0.5) (krnnx) {$\kappa_{\phi_\theta(\hat\bx)}$};
        \node[text centered] at (3.6, 1.25) {$V_{seq}$};
        
        \draw[->, color=black] (rnnx) -- (krnnx);
        
        \node[node, text centered, shading=blue] at (5.4, 0.5) (sigkrnnx) {$\Phi(\kappa_{\phi_\theta(\hat\bx)})$};
        \node[text centered] at (5.4, 1.25) {$\prod_{m=0}^M V^{\otimes m}$};
        
        \draw[->, color=black] (krnnx) -- (sigkrnnx);
        
        \node[node, text centered, shading=red] at (7.2, 0.5) (gp) {$f_\bx$};
        \node[text centered] at (7.2, 1.25) {$\GP$};
        
        \draw[->, color=black] (sigkrnnx) -- (gp);
        \end{tikzpicture}
        \captionof{figure}{The GP-Sig-RNN model}
        \label{fig:GPSigRNN}     
\end{minipage}
\end{figure*}

\paragraph{Variations.}
The following variations produce a more flexible covariance function:
\begin{enumerate*}[label=(\roman*)]
  \item given a nonlinear function $\varphi: \R^d \hookrightarrow V$ into a linear space $V$, lift a sequence $\bx=(t_i, x_i)$ to a path by taking the linear interpolation of $(0,0),(t_1,\varphi(x_1)),\ldots,(t_{l_\bx}, \varphi(x_{l_\bx}))$; with $\varphi$ the identity on $\R^d$ this recovers the original embedding~\eqref{eq: pcw linear},    
  \item 
adding lags is a classic time series pre-processing technique, justified by Takens' theorem, \cite{takens1981detecting}, that guarantees that attractors in a high-dimensional dynamical system can be reconstructed from low-dimensional observations.
\end{enumerate*}
Both points add non-linearities to the feature space which can make the learning more efficient.
If the original sequence evolves in $\R^d$, this preprocessing results in a sequence (and then path) that evolves in a general high-dimensional space $V$.
However, formulas~\eqref{eq:augment},~\eqref{eq:inducing_cov_n}, and~\eqref{eq:discr_cross_cov_n} show that only inner product evaluations on $V$ are used and these can be computationally cheap even if $V$ is high or even infinite dimensional. 
For example, following~\cite{KiralyOberhauser2019KSig} we may take a kernel $\kappa:\R^d \times \R^d \rightarrow \R$ and use $\varphi(x):=\kappa(x,\cdot)$, to build a sequence $\kappa_\bx \in V_{seq}$ in the RKHS of $\kappa$.
The only change in complexity is to replace $d$ in the big $O$-bounds by the cost of the kernel evaluation. 
Such extensions also increase the number of hyperparameters which can have adversarial effects, but in our experiments both extensions led generically to better results. 

\section{Experiments} \label{sec:experiments}
% We used GPFlow~\cite{Matthews2017GPflowAG} and Keras~\cite{Chollet2015Keras} to: \begin{enumerate*}[label=(\roman*)]
% \item
%   build three models, GP-Sig, GP-Sig-LSTM and GP-Sig-GRU and benchmark them on TS classification, 
% \item
%   measure and visualize the usefulness of sparse inducing tensors.
% \end{enumerate*} 

\paragraph{TS classification.}

Using GPFlow~\cite{Matthews2017GPflowAG}, Keras~\cite{Chollet2015Keras}, we implemented three models: GP-Sig, GP-Sig-LSTM, and GP-Sig-GRU.
All three use the signature covariance with the sparse inducing tensors of Section~\ref{sec:sparse var tensor}.
GP-Sig is a plain vanilla variational GP classifier.
Previous applications of neural nets to covariance constructions, in particular~\cite{Wilson2016DeepK, AlShedivat2017Recurrent}, inspired GP-Sig-LSTM and GP-Sig-GRU that include an RNN as a sequence-to-sequence transformation
% $\phi_\theta : \bbR^d_{seq} \rightarrow \bbR^h_{seq}$
with $h $ hidden units; see Figures~\ref{fig:GPSig} and \ref{fig:GPSigRNN} where $\hat \bx$ denotes augmentation with lags and $\kappa_{\hat \bx}$ a static kernel as in above variation paragraph. We benchmarked these GP models on $16$ multivariate TS classification datasets, a collection introduced in~\cite{baydogan2015multivarate} that has become a semi-standard archive in TS classification, e.g. we cite 7 papers in Appendix \ref{app:benchmark} that use these datasets. The same datasets are also used in \cite{Fawaz2019review} to compare several deep learning architectures for TSC.

As Bayesian baselines we used three GP models: \begin{enumerate*}[label=(\roman*)] \item
GP-LSTM and \item GP-GRU consist of an LSTM and a GRU network with an RBF kernel on top, in which case the RNNs are used as a sequence-to-vector transformation from $\bbR^d_{seq}$ to $\bbR^h$; \item GP-KConv1D uses the convolutional kernel introduced in \cite{Wilk2017ConvGP} in $1$-dimension (time) \end{enumerate*}. Throughout we used sparse variational inference: for GP-Sig-LSTM, GP-Sig-GRU, GP-Sig, the inducing tensors detailed in Section~\ref{sec:sparse var tensor} are used; for GP-LSTM and GP-GRU the inducing points are located in the output space of the RNN layer, $\bbR^h$; for GP-KConv1D, the inducing patches of \cite{Wilk2017ConvGP} are used.

We used $n_\bZ = 500$ for all models\footnote{Using $n_\bZ = 500$ is clearly superfluous for small datasets, which is fixed for the sake of consistent settings across datasets.}; further all use a static kernel in one form or another, which we fixed to be the RBF kernel.
The signature kernel was truncated\footnote{For these experiments, the $M=4$ value seemed to give an optimal trade-off between computational complexity and expressiveness of the kernel, see Appendix \ref{app:implementation} for more details.} at $M=4$, and for GP-Sig $p=1$ lags were used; the GP-Sig-RNNs did not use lags, as the sequence of hidden states already incorporate lagged information about past observations. The window size in GP-KConv-1D was set to $w=10$. 
The RNN-architectures were selected independently for all models by grid-search among $6$ variants, that is, the number of hidden units from $[8, 32, 128]$ and with or without dropout. For training, early stopping was used with $n = 500$ epochs patience; a learning rate of $\alpha = 1 \times 10^{-3}$; a minibatch size of $50$; as optimizer Adam \cite{kingma2014adam} and Nadam \cite{Dozat2015IncorporatingNM} were employed. Implementations are detailed in Appendix \ref{app:implementation}, the datasets in Appendix \ref{app:datasets}, the training and grid-search methodology in Appendix \ref{app:training}.
% Combining GPs with kernels constructed from deep learning architectures is not a new idea in general \cite{Wilson2016DeepK}, and in particular, we were inspired by the results of \cite{AlShedivat2017Recurrent}. As baselines, we use GPs with the following kernels: \begin{enumerate*}[label=(\arabic*)] \item an LSTM and \item a GRU network with an RBF kernel on top (GP-LSTM and GP-GRU), in which case the RNNs are used as a sequence-to-vector transformation from $\bbR^d_{seq}$ to $\bbR^h$ by taking the last hidden state. \item A $1$-dimensional variant of the convolutional kernel introduced in \cite{Wilk2017ConvGP}, (GP-KConv1D) \end{enumerate*}.

\paragraph{Discussion of results.}
For GPs, we report accuracies and negative log-predictive probabilities (nlpp), the latter take not only accuracies, but the calibration of probabilities into account as well.
Table \ref{table:avg_ranks} shows the average ranks among the GPs.
The full table of nlpps and accuracies with mean and standard deviation over 5 model trains are reported in Appendix~\ref{app:benchmark} in Table~\ref{table:full_nlpp_results} and Table~\ref{table:full_acc_results}.
As non-Bayesian baselines, we report accuracies of eight frequentist TS classifiers in Table~\ref{table:freq_acc_results}. On Figure \ref{fig:boxplots}, we visualize the box-plot distributions of \begin{enumerate*}[label=(\roman*)] \item negative log-predictive probabilities of GPs, \item accuracies of both GPs and frequentist methods. \end{enumerate*}

The signature models perform consistently the best in terms of average rankings of both nlpp and accuracy among the GPs. Particularly, they achieve stronger mean performance and a smaller variance across datasets. To explain this, inspecting the results in Tables \ref{table:full_nlpp_results}, \ref{table:full_acc_results}, we observe that all other GP baselines perform very poorly on some datasets, while the signature based models perform at least moderately well on \emph{all datasets}. We believe this ties in to the universality property of signatures, see Appendix \ref{app:univ}. The convolutional GP, GP-KConv1D, which also has a very small parameter set, performed rather competitively with the deep kernel baselines, even on larger datasets. Comparison among variants of GP-Sig can be summarized as follows: for smaller datasets ($n_\bX < 300$), GP-Sig outperforms other variants as it has a very small parameter set; for larger datasets $(n_\bX \geq 300)$, GP-Sig-LSTM performs best which conforms with the intuition that RNNs suffer from small sample sizes.
A related observation is that GP-LSTM and GP-GRU perform about on par, while GP-Sig-LSTM does much better than GP-Sig-GRU, which suggests that the signature makes explicit use of the additional gate in the LSTM network.

Compared only in terms of accuracy, GP-Sig competes with frequentist classifiers: it outperforms the usual DTW baseline and competes with state-of-the art classifiers such as MUSE and MLSTMFCN. Purely based on accuracy, these win overall, but the difference is usually small, hence the extra Bayesian advantages come at a small cost. Furthermore, since the MLSTMFCN is also a deep learning baseline, it would be interesting to see how it performs incorporated into a deep kernel, possibly used as a sequence-to-sequence transformation with the signature kernel on top. Obviously TS classification is a vast field and many other models could be considered; e.g.~we did not use recurrent GPs or GPSSMs since
\begin{enumerate*}[label=(\arabic*)]
\item 
  they have so-far not been used for TS classification, possibly because there is no sequential nature in the output space,%; there is no uncertainty about the outputs to propagate forward, therefore, they are probably expected to perform similar to a deterministic non-linear state space model with a GP layer on top.
\item we did not find a GPflow implementation that would allow to use sequence kernels in the GP transition function. (Implementation of~\cite{Ialongo2019Overcoming} does currently not allow taking subsequences of past states.
  An implementation would require much further work, but an interesting project would be to combine our models.) 
\end{enumerate*}

\begin{figure}[t] 
	\centering
	\begin{minipage}{0.55\textwidth}
		\hspace{-40pt}
		\centering
		\includegraphics[width=3.25in]{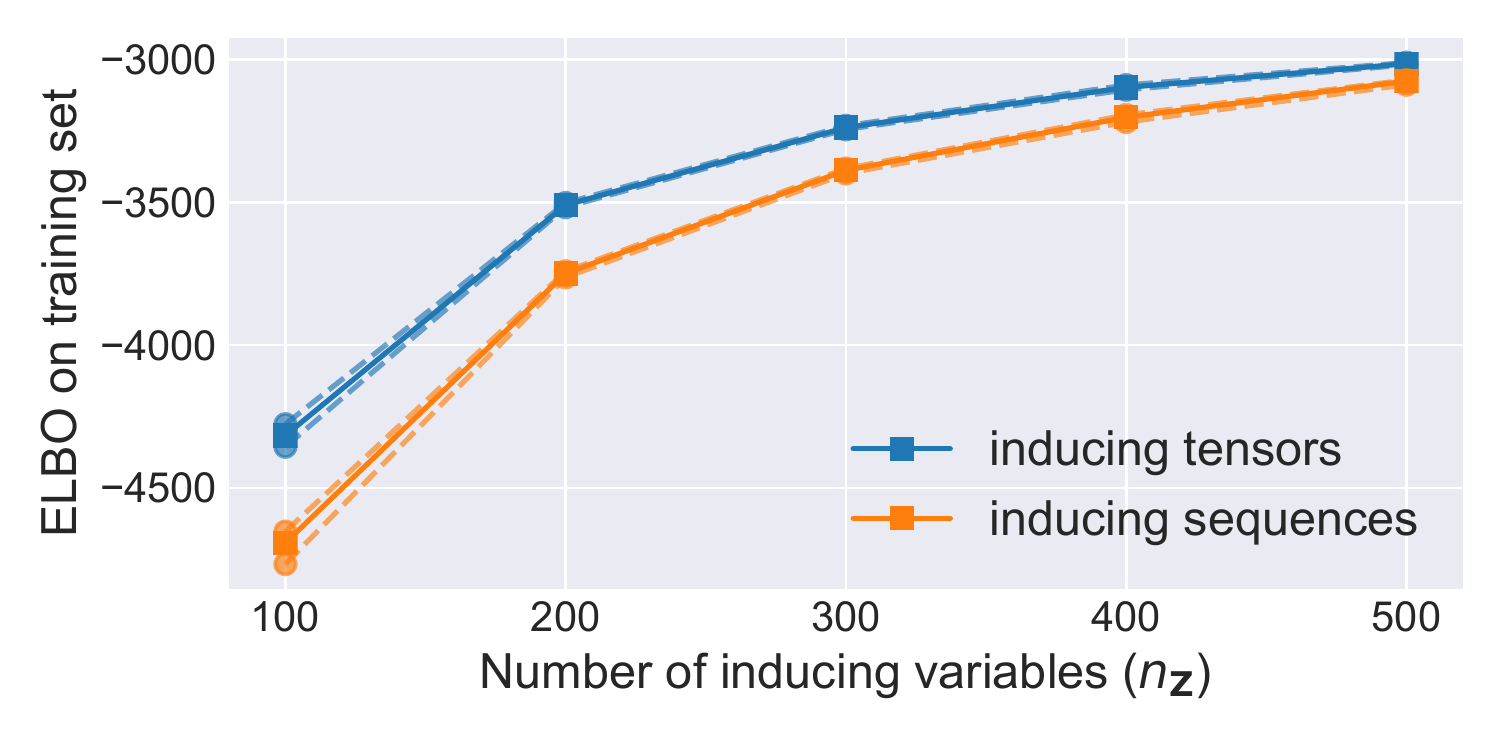}
		% \captionof{figure}{Caption for image}
		% \label{fig:sample_figure}
	\end{minipage}
	\begin{minipage}{0.55\textwidth}
		\hspace{-40pt}
		\centering
		\includegraphics[width=3.25in]{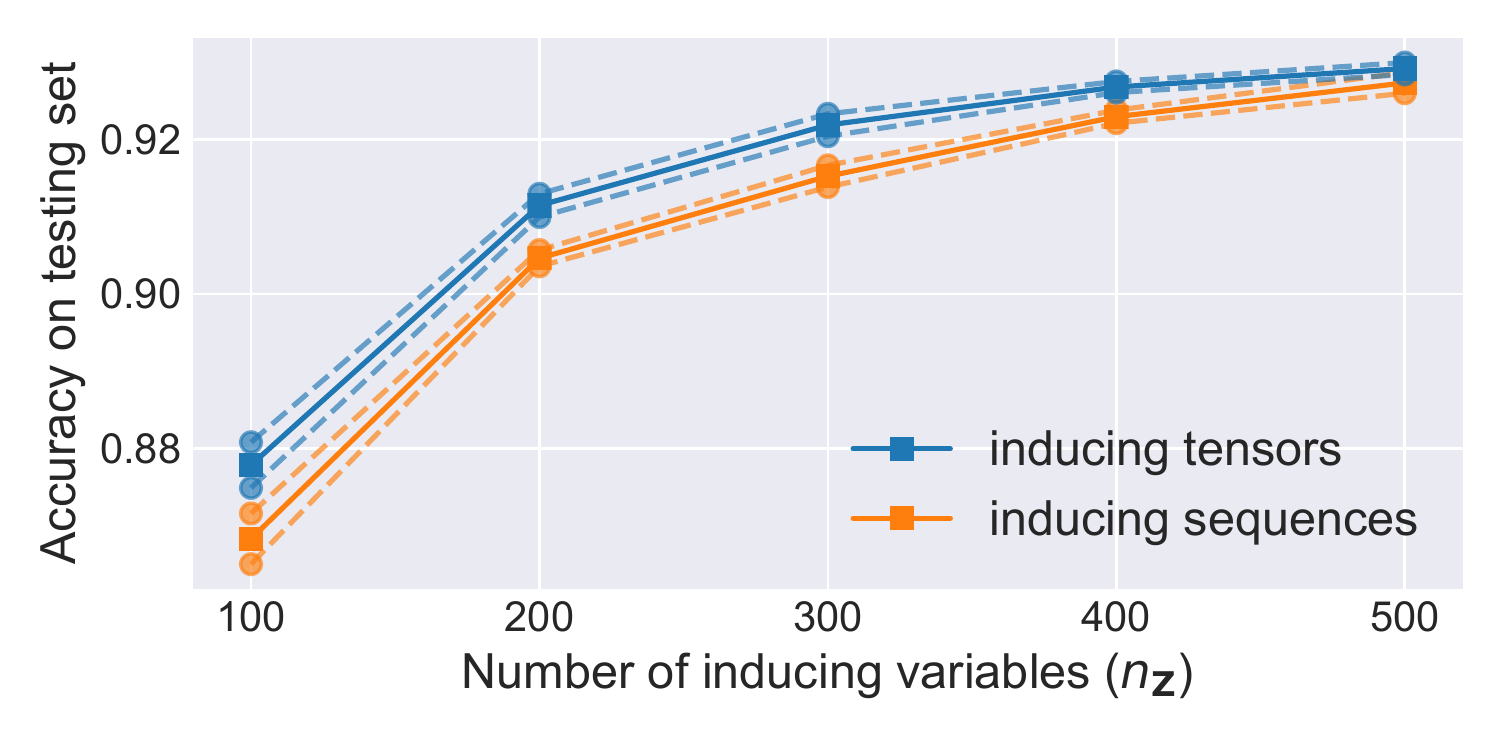}
		% \captionof{figure}{Caption for image}
		% \label{fig:sample_figure}
	\end{minipage}
	\begin{minipage}{0.55\textwidth}
	    \hspace{-40pt}
		\centering
		\includegraphics[width=3.25in]{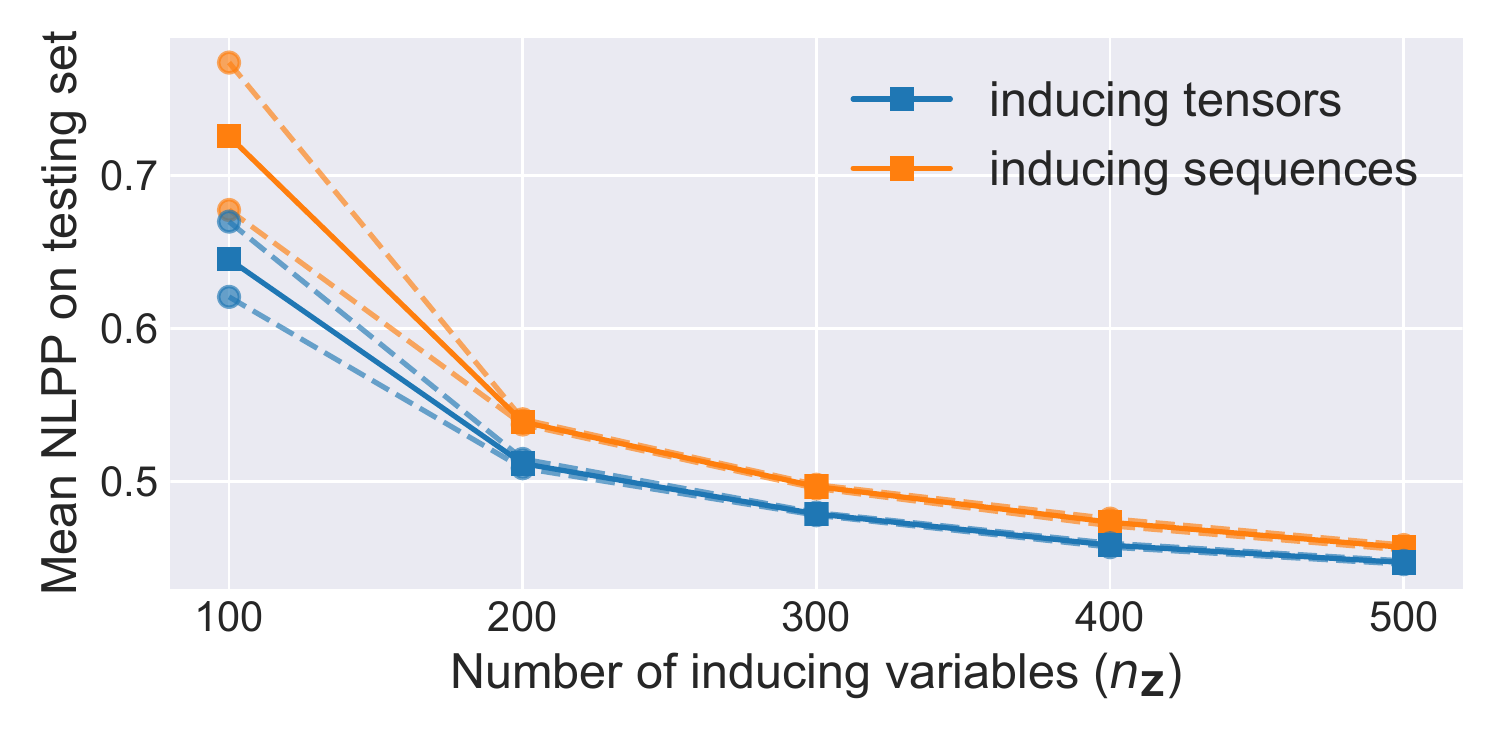}
		% \captionof{figure}{Caption for image}
		% \label{fig:sample_figure}
	\end{minipage}
% 	\vspace{-10pt}
	\caption{Achieved ELBO (top), accuracy (middle), mean nlpp (bottom) after $300$ epochs of training the variational parameters with random initialization and pre-learnt kernel hyperparameters, that were treated as fixed. Solid is the mean over $5$ independent runs, dashed is the $1$-std region.}
	%	\caption{ELBO on training set (left) and mean NLPP on holdout set (right) plotted against the the number of inducing variables used in the sparse approximation after training on the data for $300$ epochs with fixed kernel hyperparameters, that were learnt a-priori. In solid is the mean value and dashed is the one standard-deviation tube over 5 independent runs for each setting with random initialization of the variational parameters. }
	%	\label{fig:convergence}
	\label{fig:ind_tens_vs_ind_seq}
	\vspace{-10pt}
\end{figure}

\paragraph{Inducing tensors vs inducing sequences.} Our results rely on the inter-domain approach using tensors to locate inducing points from Section~\ref{sec:sparse var tensor}. %$ to reduce the time and memory complexities of computing $K_{\bZ\bZ}$ and $K_{\bX\bZ}$. 
An alternative is to use sequences for the inducing points, $\bZ \subset \cX_{seq}$, and controlling their maximal length $l_\bZ := \max_{\bz \in \bZ} l_{\bz}$ to be of order $m$, i.e. $l_{\bZ} \sim m$.
We coin this approach \textit{inducing sequences}.
Intuitively, one expects the inducing tensors to be more efficient than inducing sequences, since they make full use of the structure of the signature feature space/covariance.
To test this intuition, we compared the performance of the inducing tensors and inducing sequences subject to both having the same computational complexitiy.
For this experiment, we took the AUSLAN dataset~\cite{Dua2017UCI}, which consists of $n_c = 95$ classes for $n_\bX = 1140$ training examples.
This is a challenging dataset as the inducing variables need to characterize the abundance of classification boundaries.

We used GP-Sig with the same settings as in the previous experiments.
The hyperparameters of the kernel were a-priori learnt with $n_{\bZ} = 500$ inducing tensors, and we purely investigated how the quality of the approximation changes for both approaches by varying the number of inducing points $n_\bZ$.
For each number of inducing variables, both approaches were trained independently $5$ times for $300$ epochs with random initialization of the inducing variables, for details on which see Appendix~\ref{app:training}.
We plot on Figure \ref{fig:ind_tens_vs_ind_seq} three metrics: \begin{enumerate*}[label=(\arabic*)] \item the achieved ELBO on the training set, \item the achieved accuracy, and \item nlpp on the testing set \end{enumerate*}.
At $n_{\bZ} = 500$ both approaches are close to saturation, but the inducing tensors consistently perform better.
We remark that in practice, an important aspect is also how well the kernel hyperparameters can be recovered, that we did not consider here, and is a tricky question for sparse variational inference in general \cite{bauer2016understanding}.
Although, intuition suggests that the closer the model to saturation is with respect to the inducing points, the more consistent should the optimization be with un-sparsified variational inference.

\begin{figure}[t]
	\centering
	\begin{minipage}{0.55\textwidth}
        \hspace{-40pt}
        \centering
		\includegraphics[width=3.40in]{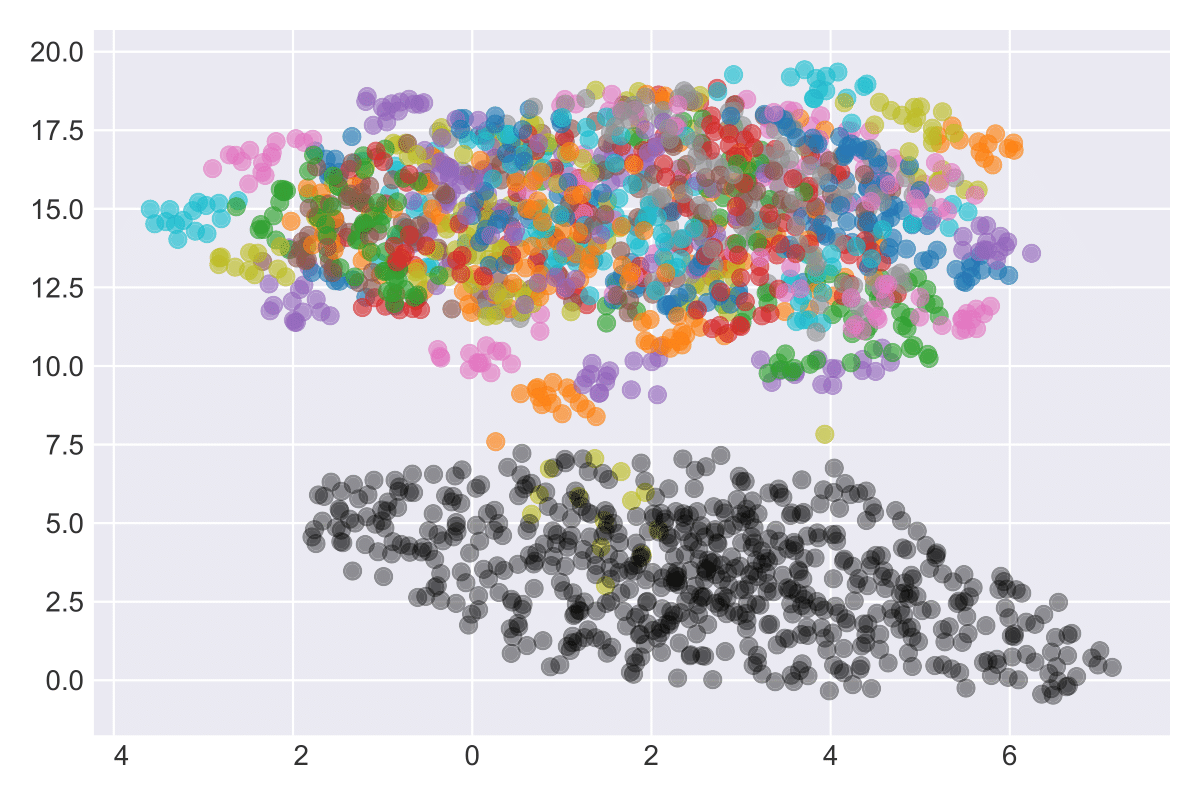}
		% \captionof{figure}{Caption for image}
		% \label{fig:sample_figure}
	\end{minipage} \hspace{-5pt}
	\vspace{-10pt}
	\caption{A UMAP visualization of the allocation of feature representations of data-points (coloured), and inducing tensors (black) in the feature space on the AUSLAN dataset.}
	%	\caption{ELBO on training set (left) and mean NLPP on holdout set (right) plotted against the the number of inducing variables used in the sparse approximation after training on the data for $300$ epochs with fixed kernel hyperparameters, that were learnt a-priori. In solid is the mean value and dashed is the one standard-deviation tube over 5 independent runs for each setting with random initialization of the variational parameters. }
	%	\label{fig:convergence}
	\label{fig:tens_umap}
	\vspace{-10pt}
\end{figure}

\paragraph{Visualizing inducing tensors.}
To gain more intuition, we visualized the feature space for one of the trained models on AUSLAN with $n_\bZ = 500$ inducing tensors.
We used UMAP~\cite{mcinnes2018umap-software} with the semi-metric $(\bx, \by) \mapsto (\kernel(\bx, \bx) + \kernel(\by, \by) - 2 \kernel(\bx, \by))^{0.5}$ for $\bx, \by \in \cX \cup \cX^\p$, see~Figure~\ref{fig:tens_umap}.
There are two imminent observations: \begin{enumerate*}[label=(\roman*)] \item in the point cloud corresponding to the data, the classes hardly look linearly separable; \label{umap:point1} \item the tensors, however, seem to live on a completely separate subspace than the data\label{umap:point2}.\end{enumerate*} 
The algorithm achieves $92\%$ accuracy on this set, therefore, point \ref{umap:point1} is likely due to information being lost in the projection. 
However, point \ref{umap:point2} challenges the intuition about classical sparse variational inference, that the inducing points are located mixed-in with the data-points, concentrating close to the classification boundaries \cite{Hensman2015Scalable}.
In general, the mechanism of how inter-domain inducing points represent the information in the data seems to be more complicated than classically.

To explain point \ref{umap:point2}, we remark that this phenomenon is not surprising at all: signature features live in a manifold that is embedded in the linear tensor space $\prod_{m=0}^M V^{\otimes m}$. 
In general, sparse tensors of the form~\eqref{eq:sparse_tens} will \emph{not} be signatures of paths. 
We believe variational inference works because of an interplay of two factors:
Firstly, signatures of finite sequences can be written as finite linear combinations of such sparse tensors. 
Secondly, the prior conditional term used to define $q(f_{\bx} \vert f_{\bZ})) = p(f_{\bx} \vert f_{\bZ})$.
The feature space is \emph{congruent} to the prior GP~\cite{berlinet2003reproducing}, which means that for $\bx \in \cX_{seq}$, the value of $f_\bx$ given $f_{\bZ}$ is not only almost deterministic when $\bx$ is close to any of $\bz \in \bZ$, but when it is close to any linear combinations of elements in $\bZ$.
By the first remark this can always be achieved given a large enough $n_\bZ$.
To sum up, the inducing tensors do not represent signature features individually, but form atomic building blocks such that their linear combinations induce the actual variational posterior at the data-examples.
%Nevertheless, the observation that tensors take the form of a data projection into a parallel subspace is quite intriguing.

\section{Conclusion}
We used a classical object from stochastic analysis -- signatures -- to define a GP for sequential data.
The GP inherits many of the theoretical guarantees that are known for signature features such as universality and parametrization invariance.
To make it scalable, we develop ``inducing tensors'' that exploit the structure of the feature space, inter-domain inducing points, and variational inference. 
Applied in a plain vanilla variational framework, this yields a classifier, GP-Sig, that is not only competitive in terms of nlpp with other GP models, but also with state-of-the-art frequentist TS classifiers in terms of accuracy alone. As one of our reviewers remarked, several datasets we consider have a strong signal-to-noise ratio, which makes it worthwhile to point out that even for such datasets, the alternative GP baselines suffer on at least some of them, while the proposed models are consistently able to learn on all datasets. This observation ties in to the universality property, and it suggests that GPs with signatures can be a good starting point when building Bayesian models on time series datasets.
%$Especially the latter seems encouraging since TS classification is a classical subject involving many different communities with very strong baselines.

We also demonstrate that signatures can be used as a building block in deep kernels to build larger GP models that leverage the benefits of both, RNNs and signatures. Interestingly, we find that the vanilla GP-Sig model outperforms the GP-Sig-RNNs for smaller datasets, conforming to the intuition that smaller sample sizes are detrimental for recurrent neural nets. To really get the best of both worlds, one could insert an additional model selection step, that specifies whether a parametric transformation is layer used before feeding the input into the kernel or not. Alternatively, it could also be possible to increase the flexibility of the sequential GP model while staying within a purely nonparametric framework using deep GPs \cite{damianou2013deep} by e.g.~applying a GP layer as observation-wise state-space embedding before the kernel computation. The inference framework of \cite{salimbeni19deep} for deep GPs could also come in handy when moving to datasets with lower signal-to-noise ratios, which can require GP models capable of handling not only epistemic (reducible) uncertainty, but aleatoric (irreducible) uncertainty in the data. It would also be interesting to see if such sequence kernels can be used to improve recurrent GP models \cite{Mattos2016Recurrent, Ialongo2019Overcoming} by incorporating sequential information into the GP transition function, that could potentially allow for a more efficient latent state representation.

\section*{Acknowledgments}
CT is supported by the "Mathematical Institute Award" by the University of Oxford, HO is supported by the EPSRC grant “Datasig” [EP/S026347/1], the Alan Turing Institute, and the Oxford-Man Institute.
CT and HO would like to thank the reviewers for helpful and constructive comments.
% \putbib
% \end{bibunit}
\bibliography{roughpaths}
\bibliographystyle{icml2020}

% \begin{bibunit}
	
	%%%%%%%%%%%%%%%%%%%%%%%%%%%%%%%%%%%%%%%%%%%%%%%%%%%%%%%%%%%%%%%%%%%%%%%%%%%%%%%
	%%%%%%%%%%%%%%%%%%%%%%%%%%%%%%%%%%%%%%%%%%%%%%%%%%%%%%%%%%%%%%%%%%%%%%%%%%%%%%%
	% DELETE THIS PART. DO NOT PLACE CONTENT AFTER THE REFERENCES!
	%%%%%%%%%%%%%%%%%%%%%%%%%%%%%%%%%%%%%%%%%%%%%%%%%%%%%%%%%%%%%%%%%%%%%%%%%%%%%%%
	%%%%%%%%%%%%%%%%%%%%%%%%%%%%%%%%%%%%%%%%%%%%%%%%%%%%%%%%%%%%%%%%%%%%%%%%%%%%%%%
	\appendix
	% \section{Do \emph{not} have an appendix here}
	
	% \textbf{\emph{Do not put content after the references.}}
	% %
	% Put anything that you might normally include after the references in a separate
	% supplementary file.
	
	% We recommend that you build supplementary material in a separate document.
	% If you must create one PDF and cut it up, please be careful to use a tool that
	% doesn't alter the margins, and that doesn't aggressively rewrite the PDF file.
	% pdftk usually works fine. 
	
	% \textbf{Please do not use Apple's preview to cut off supplementary material.} In
	% previous years it has altered margins, and created headaches at the camera-ready
	% stage. 
	%%%%%%%%%%%%%%%%%%%%%%%%%%%%%%%%%%%%%%%%%%%%%%%%%%%%%%%%%%%%%%%%%%%%%%%%%%%%%%%
	%%%%%%%%%%%%%%%%%%%%%%%%%%%%%%%%%%%%%%%%%%%%%%%%%%%%%%%%%%%%%%%%%%%%%%%%%%%%%%%
	
	\newpage 
	\clearpage
	
    \textbf{\large Supplementary material}
	
	\section{Tensors} \label{app:tensors}
  We recall classical constructions with tensors.
  \subsection{Tensor products of vector spaces}
  If $u=(u_1,\ldots,u_d)\in \R^d$ and $v=(v_1,\ldots, v_e)\in \R^e$ then $u \otimes v \in \R^d \otimes \R^e $ is the $(d\times e)$-matrix with indices $i \in \{1,\ldots, d\}$, $j \in \{1,\ldots, e\}$ and the $(i,j)$-th entry given as $(u\otimes v)_{i,j}= u_i v_j$.
  Similarly, for $u \in \R^d$,$v \in \R^e$, $w \in \R^f$, the tensor $u \otimes v \otimes w \in \R^d \otimes \R^e \otimes \R^f$ has indices $i \in \{1,\ldots, d\}$, $j \in \{1,\ldots, e\}$, $j \in \{1,\ldots, f\}$ and its $(i,j,k)$-th entry is given as $(u \otimes v \otimes w)_{i,j,k} = u_i v_j w_k$, etc. 

  In the paragraph about variations in Section~\ref{sec:sparse var tensor}, we mention that one can also lift the sequence to a path evolving in an infinite-dimensional space $V$ rather than $\R^d$ before computing its signatures.
  Since $\int d x^{\otimes m} \in V^{\otimes m}$ this requires to take a tensor product of an infinite-dimensional space $V$.
  Since this might be less known in ML, let us briefly recall a coordinate-free definition of the tensor product:
  If $U$ and $V$ are vector spaces (not necessarily finite dimensional) then there exists a linear space $U\otimes V$ and a bilinear map $\iota: U \times V \rightarrow U \otimes V$ such that any other bilinear map on $U \times V$ factors through $U \otimes V$, that is given any bilinear map $B: U \times V \rightarrow Z$ into a vector space $Z$, there exists a linear map $\hat B: U \otimes V \rightarrow Z$ such that $\hat B \circ \iota = B $.
  Further, the vector space $U \otimes V$ is unique up to isomorphism.
  If $U,V$ are finite dimensional it is easy to verify that one recovers the coordinate-wise definition we recalled at the beginning of this section.
  If $U=V$ then we write $V^{\otimes 2}$ instead of $V \otimes V$; further by convention we define $V^{\otimes 0}:=\{1\}$.

\subsection{Sequences of tensors $\prod_{m=0}^M V^{\otimes m}$}
The direct product $\prod_{m\ge0} V_m$ of vector spaces $V_1,V_2,\ldots $ is the set of sequences 
\begin{align}
 \prod_{m\ge0} V_m :=\{(t_0,t_1,t_2,\ldots, ): t_m \in V_m\}.
\end{align}
In our setting, we apply this when $V$ is a vector space and $V_m:=V^{\otimes m}$, the get the space 
\begin{align}
  \prod_{m\ge0} V^{\otimes m}.
\end{align}
That is, an element $t =(t_m)_{m \ge 0} \in   \bigoplus_{m\ge0} V^{\otimes m}$ is a sequence of tensors of increasing depth, that is $t_0 = 1$ since by convention $V^{\otimes 0}= \{1\}$, $t_1 \in V$ is a vector, $t_2 \in V^{\otimes 2}$, etc.

The space $\prod_{m\ge0} V^{\otimes m}$ is itself a vector space if one defines addition and scalar multiplication coordinate-wise: for $s=(s_m)_{m \ge 0}$,$t=(t_m)_{m \ge 0} \in \prod_{m \ge 0} V^{\otimes m} $
\begin{align}
 s+t &= (s_0 + t_0, s_1 + t_1, s_2 + t_2,\ldots)\\ 
 \lambda \cdot s &= (\lambda s_0, \lambda s_1, \lambda s_2,\ldots)
\end{align}
That is, if $V=\R^d$ we add vectors to vectors, matrices to matrices, etc.
We note that the space $\prod_{m \ge 0} V^{\otimes m}$ is not just a vector space but has also a natural algebra structure and the space $\prod_{m \ge0} V^{\otimes m}$ is often referred to as the tensor algebra over $V$.
\subsection{Inner products of tensors}
We have seen how to build out of a linear space $V$ another linear space $\prod V^{\otimes m}$ of tensors.
If $V$ also carries an inner product, $\langle \cdot,\cdot \rangle_V$ this extends canonically to an inner product on subset of $\TA{V}$; set
\begin{align}
\langle v_1 \otimes \cdots \otimes v_m, w_1 \otimes \cdots \otimes w_m \rangle := \prod_{i=1}^m \langle v_i,w_i \rangle_V
\end{align}
and extend linearly to $\{t \in \TA{V}: \langle t,t \rangle < \infty \}$.
In particular, we can take linear functionals of $t$.

\subsection{Example: the classic polynomial features}
Take $\R^d$ with the standard Euclidean inner product.
In Section~\ref{sec:background} we recalled the classic ``polynomial feature map'' that takes a point in $\R^d$ to monomials in coordinates in $x$,
\begin{align}\label{eq: polynomial feature map}
\varphi: \bx \mapsto (\bx^{\otimes m})_{m \ge 0} \in \TA{(\R^d)}. 
\end{align}
We can build a functions $f: V \rightarrow \R$ by taking linear functionals of $\varphi$, that is for $\ell \in \prod_{m=0}^M (\R^d)^{\otimes m}$ define
\begin{align}\label{eq:pol feature}
 f: \bx \mapsto \langle  \ell, \varphi(\bx) \rangle  .
\end{align}
It might be helpful for readers less familiar with tensor products to spell out the definition of $f$ in coordinates:
by definition of the inner product we have 
\begin{align}
  \langle \ell, \Phi(x) \rangle= \sum_{m=1}^M \langle \ell_m, \bx^{\otimes m} \rangle.
\end{align}
Spelled out in coordinates, $\bx=(x_1,\ldots,x_d)$ and $\ell_m=(\ell_m^{i_1,\ldots,i_m})_{i_1,\ldots,i_m \in \{1,\ldots,d\}}$, the terms in the sum read as 
\begin{align}
\langle \ell_m, \bx^\otimes \rangle = \sum_{i_1,\ldots,i_m \in \{1,\ldots, d\}} \ell_m^{i_1,\ldots,i_m} x_{i_1}\cdots x_{i_m}.
\end{align}
Thus formulated in coordinates one has 
\begin{align}
f(x)&= \ell_0+\ell_1^1 x_1 +\cdots \ell_1^1 x_d \\&+ \ell_2^{1,1}x_1^2 + \ell_2^{1,2}  x_1 x_2 + \cdots + \ell_2^{2,2}x_d^2\\&+ \vdots \\&+ \ell_m^{1,\ldots,1} x_1^m +\cdots +\ell_m^{d,\ldots,d} x_d^m
\end{align}
which is how the polynomial feature map is often represented, see~\cite{Rasmussen2006Gaussian}.  
%Thus every $w \in \DS{V}\subset \TA{V}$ gives a linear functional $\langle w, \Sig(\bx) \rangle$ of the signature $\Sig(\bx)$.

  \section{Signature features}\label{app:signatures}
  In this Section we give background on signature features. 
  Signature features can  be seen as a natural generalization of the polynomial feature map, but instead of mapping a point in $\R^d$ to a sequence of tensors, they map paths $\cX_{path}$ to a sequence of tensors. 
  They generalize many of the nice properties of polynomial features such as universality and simulatenuously give the option to ignore the time-parametrization without an explicit search over all possible time changes (like in DTW approaches). 
	\subsection{Definition} \label{appendix:sig_properties}
	By Definition~\eqref{eq: signature}, the signature features are given as iterated integrals  
\begin{align}
 \textstyle{\Phi(\bx) = (1,\int_0^{t_\bx} d\bx, \int_0^{t_\bx} d\bx^{\otimes 2}, \ldots, \int_{0}^{t_\bx} d\bx_\tau^{\otimes m})} 
\end{align}
where $\textstyle{\int_0^{t} d\bx^{\otimes (m+1)}:= \int_0^{t_\bx} \int_0^{s}d\bx^{\otimes m} \otimes d\bx(s) \in (\R^d)^{\otimes m}}$ and by convention $\textstyle{\int_0^{t} d\bx^{\otimes 1}:= \bx(t)}$.
Hence, $\Phi(\bx) \in \prod_{m=0}^M (\R^d)^{\otimes m}$.
\subsection{Examples}
\paragraph{Coordinate-wise.}
For a path $x: t \mapsto (x_1(t), \ldots, x_d(t))$ that evolves in $\R^d$, one can spell this out in coordinates: the $m$-th signature feature $\int dx^{\otimes m} \in (\R^d)^{\otimes m}$ is the tensor that has as its $(i_1,\ldots, i_m) \in \{1,\ldots,d\}^m$-th coordinate entry the real number computed by a Riemann--Stieltjes integral
\begin{align*} \label{eq: sig coordinates}
\textstyle{\int dx_{i_1}(t_1) \cdots dx_{i_m}(t_m) = \int\dot x_{i_1}(t_1) \cdots \dot x_{i_m}(t_m) dt_1\cdots dt_m}
\end{align*}
where the integration is taken over ${0 \le t_1 < \cdots < t_m \le t_{\bx}}$ and $\dot x_i(t):= \frac{d x_i(t)}{dt}$.
\paragraph{Linear paths.}
Consider the path $x:[0,1] \rightarrow \R^d$ that just runs along a straight line
\begin{align}
  x(t)= t \mapsto tv 
\end{align}
where $v \in \R^d$ is a given vector.
Plugging~\eqref{eq: sig coordinates} into the definition of the iterated integrals, we get by a direct calculation that 
\begin{align}
 \int dx^{\otimes m}= \frac{v^{\otimes m}}{m!} \in (\R^d)^{\otimes m}.
\end{align}

We see that for this special case of a path $x$ that is fully described by its increment $x(t_{\bx}) - x(0) = v$, the signature features $\Phi(\bx)$ equal the polynomial features $\varphi(\bx(t_\bx) - \bx(0))$ of the total increment $v = \bx(t_\bx) - \bx(0)$ up to a rescaling by a constant $\frac{1}{m!}$. 
(This is one of the many reasons why signature features are regarded as ``polynomials of paths''). 

\paragraph{Piecewise linear paths.}
In general, these integrals need to be computed by standard integration techniques but for a piecewise linear path $\bx$, that is $[0,t]$ is partitioned into $L$ disjoint intervals, $[0,t_\bx] = \bigsqcup_{i=0}^{L-1} [t_i,t_{i+1}]$, and $\bx$ is piecewise linear on each of these pieces, $\bx(t)= t \cdot v_i $ for $t \in [t_i,t_{i+1}]$ for a vector $v_i \in \R^d$, then these iterated integrals just reduce to iterated sums, $\int dx^{\otimes m} $ equals 
\begin{align}
\textstyle{\sum_\bi c(\bi) v_{{i_1}}\otimes \cdots \otimes v_{{i_m}} \cdot (t_{i_1+1}- t_{t_1}) \cdots(t_{t_{i_m}+1}- t_{t_{i_m}})}
\end{align}
where the sum is taken over all tuples $\bi=(i_1,\ldots,i_m) \in \{1,\ldots, L\}^m$ and $c(\bi)$ is the inverse of the natural number $|\{p: \{1,\ldots, L\} \rightarrow \{1,\ldots,m\},\, p(i+1) \ge p(i), p(\bi)=\bi\}|!$.  
	
\subsection{Parametrization invariance.}
A classic result going back to Chen~\cite{chen-58} shows that the map $\bx \mapsto \Phi_0(\bx)$ is injective up to tree-like equivalence. 
Loosely speaking, tree-like equivalence is from a purely analytic point of view more natural to work with than reparametrization since tree-like equivalence between paths is analogous to Lebesgue almost sure equivalence between sets.
Howevever, we emphasize that from a practical point of view, the difference between paths that are tree-like equivalent and paths that differ by a reparametrization is negligible and we invite the reader to use them as synonyms throughout this article.
Nevertheless, we give the precise definition below and refer the interested reader to~\cite{MR2630037} for a detailed discussion. 
\begin{definition}
  A bounded variation path $\bx:[0,t_\bx] \rightarrow V$ is \emph{tree-like} if there exists a continuous function $h:[0,t_\bx] \rightarrow [0,\infty)$ such that $h(0)= h(T)=0$ and such that for all $s < t$
  \begin{align}
   | \bx(t) - \bx(s) | \le h(s) + h(t) - 2 \inf_{u \in [s,t]} h(u). 
  \end{align}
\end{definition}
\begin{theorem}
  Let $\bx:[0,t_\bx] \rightarrow V$ and $\by:[0,t_\by] \rightarrow V$ be two paths of bounded variation.
  Then 
\begin{align}
 \Phi(\bx) = \Phi(\by) 
\end{align}
if and only if $\bx \star \overleftarrow \by $ is tree-like where $\star$ denotes path concatenation and $\overleftarrow{\by}(t):= \by(t_\by - t)$ denotes time-reversal.
\end{theorem}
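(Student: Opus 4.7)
The strategy is to reduce the two-path biconditional to an equivalent single-path statement and then invoke the Hambly--Lyons characterization of paths with trivial signature. Two algebraic facts about signatures drive the reduction. First, Chen's identity: signatures are multiplicative under concatenation, $\Phi(\bx \star \by) = \Phi(\bx) \otimes \Phi(\by)$ in the tensor algebra $\TA{V}$, which follows by splitting each iterated integral at the concatenation time. Second, time reversal inverts the signature, $\Phi(\overleftarrow{\by}) = \Phi(\by)^{-1}$, obtained by a change of variable $t \mapsto t_\by - t$ in each iterated integral together with a sign count. Combining these gives $\Phi(\bx \star \overleftarrow{\by}) = \Phi(\bx) \otimes \Phi(\by)^{-1}$, so $\Phi(\bx) = \Phi(\by)$ is equivalent to $\Phi(\gamma) = \mathbf{1}$, where $\gamma := \bx \star \overleftarrow{\by}$ and $\mathbf{1} := (1, 0, 0, \ldots)$ is the unit of the tensor algebra. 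It therefore suffices to prove that $\Phi(\gamma) = \mathbf{1}$ if and only if $\gamma$ is tree-like.

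For the easy implication, I would assume $\gamma$ is tree-like with associated height function $h$ and approximate $\gamma$ in $1$-variation norm by paths $\gamma_n$ consisting of finitely many ``out-and-back'' excursions along the edges of the $\R$-tree encoded by $h$. For any such out-and-back excursion, every iterated integral vanishes by direct cancellation between the forward and backward traversals; Chen's identity applied to the concatenation then yields $\Phi(\gamma_n) = \mathbf{1}$. Passing to the limit using continuity of the signature map in $1$-variation norm gives $\Phi(\gamma) = \mathbf{1}$. The existence of such approximants is a classical fact about $\R$-trees and follows from the continuity of $h$ together with the inequality defining tree-likeness.

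The reverse implication is the main obstacle and constitutes the substance of the theorem, originally established by Hambly and Lyons. The plan is to construct a candidate height function directly from $\gamma$, for instance as the length of the reduced path from $0$ to $t$ in the quotient obtained by identifying maximal subintervals on which $\gamma$ exactly retraces itself, and then verify the defining inequality $|\gamma(t) - \gamma(s)| \le h(s) + h(t) - 2 \inf_{u \in [s,t]} h(u)$ for all $s < t$. The key technical input is that any bounded-variation path with non-trivial reduced form admits a linear functional on $\TA{V}$ that distinguishes its signature from $\mathbf{1}$; such a functional is produced by exponentiating a carefully chosen element of the free Lie algebra over $V$ and invoking injectivity of the signature on reduced piecewise-linear paths via the Chen--Chow theorem. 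Extension from piecewise-linear to general bounded-variation paths proceeds by density together with continuity of $\Phi$ in $1$-variation norm. The hard part will be the analytic passage to the limit and the verification that the candidate $h$ produced by the reduction is continuous and satisfies the boundary conditions $h(0) = h(t_\gamma) = 0$; for a fully rigorous execution I would defer to the original Hambly--Lyons argument.
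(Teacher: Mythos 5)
The paper does not actually prove this theorem: it is stated as a classical result, with the reader referred to Chen \cite{chen-58} and to \cite{MR2630037} (Hambly--Lyons) for the details, so there is no in-paper argument to compare against. Your reduction --- using Chen's identity and the fact that time reversal inverts the signature to turn the two-path biconditional into the statement that $\Phi(\gamma)=\mathbf{1}$ if and only if $\gamma=\bx\star\overleftarrow{\by}$ is tree-like --- is the correct and standard first step, and you rightly attribute the substantive direction (trivial signature implies tree-like) to the Hambly--Lyons argument, which is exactly the external input the paper itself relies on.
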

In particular this implies that for any function of the form 
\begin{align}
 f(\bx) = \langle \ell, \Phi_0 f(\bx) \rangle 
\end{align}
$f(\bx) = f(\by)$ if and only if $\bx$ and $\by$ differ by parametrization (strictly speaking, by a tree-like equivalence).
This ability to factor out time-invariance can be very powerful since the space of all possible time reparametrization is huge and we never make an explicit search over all possible time changes like in the calculation of DTW distance.

%In our signature features, this time-invariance is build in, so this can make the learning every efficient; in particular, one never does not need to perform a search over the space of all possible time changes as is done in DTW. 
\subsection{Parametrization variance.}
Often the functions of sequences $f(\bx)$ one is interested in, are invariant up to a certain degree of reparametrization but not invariant to extreme reparametrizations.
For a stylized example consider TS that arise as blood pressure measurements from patients responding to medication: some patients respond slower, some faster, depending on metabolism and many other factors. 
Up to a certain degree of time-reparameterisation one should observe a similar shaped TS if the medication works. 
However, the feature map should allow to distinguish extreme cases, e.g.~where the blood pressure is rapidly falling.

To address, we added an extra coordinate to the path $\bx$ before computing the signature features of this enhanced path $\bx_\tau(t)=(\tau \cdot t, \bx(t)) \in \R^{d+1}$, for $\tau > 0$.
The enhanced path $\bx_\tau$ is never tree-like since the first coordinate $t \mapsto t\cdot \tau$ is strictly increasing.
Formulated, differently: this ''trick`' makes the parametrization part of the trajectory. 
Hence, the map
\begin{align}
\textstyle{\cX_{paths}\ni \bx \mapsto \Phi_\tau(\bx):=\Phi(\bx_\tau) \in \prod_{m=0}^M (\R^{1+d})^{\otimes m} }
\end{align}
is injective for $\tau>0$.

\subsection{Universality.} \label{app:univ}
One of the most attractive properties of the classical polynomial feature map $\bx \rightarrow \varphi(\bx)$ for vectors $\bx \in \cX=\R^d$,~\eqref{eq: polynomial feature map}, is that any continuous function $f: \cX=\R^d \rightarrow \R$ can be uniformly approximated on compact sets as a linear functional of $\varphi$, that is $f(\bx) \approx \langle \ell, \varphi(\bx) \rangle$ for some $\ell$.
The reason is that linear combinations of monomials (polynomials) form an algebra and the Stone--Weierstrass theorem applies.
Such approximation properties of feature maps are usually referred to as ``universality'' in the ML literature.

One of the most attractive properties of the signature feature map $\bx \mapsto \Phi_\tau(\bx)$ for paths $\bx \in \cX_{paths}$ is that a universality result holds. 
For every continuous $f: \cX_{paths} \rightarrow \R$, $K\subset \cX_{paths}$ compact, $\epsilon>0$ there exists a $M\ge 1$, $\ell  \in \prod_{m = 0}^M V^{\otimes m}$ such that  
 	\begin{align}%\label{eq:universal}
	\label{eq:SW1}
\textstyle{\sup_{\bx \in K } | f(\bx) - \langle \ell, \Phi_{\tau}(\bx) \rangle | < \epsilon.}
	\end{align}
The analogous result holds for $\tau=0$ when we replace the domain $\cX_{paths}$ by equivalence classes of paths (under reparameterisation/tree-like equivalence). 
For a proof and many extensions, see~\cite{ChevyrevOberhauser18}.
\subsection{High-frequency sampling} 
One way to think about the embedding of $\cX_{seq} \hookrightarrow \cX_{paths}$ is that $\cX_{paths}$ represents the ``real-world'' where quantities evolve in continuous time but due to pratical reasons such as storage cost we only have access to their preimage in $\cX_{seq}$. 
%For example, if $\bx \in \cX_{paths}$ we sample it along a grid $\pi:=\{(t_1,\ldots,t_{\ell}):0\le t_1<\cdots < t_{\ell} \le t_\bx\}$ and record the measurements $x_i:=\bx(t_i)$ to produce a TS/sequence $\bx^{\pi} = (t_i,x_i)_{i=1,\ldots, t_\bx} \in \cX_{seq}$.
A natural question is what happens when the sampling gets finer and finer. 
We believe such consistency in the high-frequency sampling limit is important for the same reason, consistency in the number of samples $n_\bX$ is important: although in practice we only deal with finite numbers (finite number of samples, sequences rather than paths), we want that our method makes sense as we get more and more information.
In the context of learning with sequences this does not only require to study $n_\bX \rightarrow \infty$ but also the limit as the mesh size of that sampling grid converges to $0$. 

\paragraph{Consistency.}More formally, given $\bx \in \cX_{paths}$ consider a sequence $(\pi_k)$ of partitions
\begin{align}
\pi_{k}=\{(t^k_1,\ldots,t^k_n):0 \le t_1^k<\cdots < t_{n}^k \le t_{\bx}\}
\end{align}
with vanishing mesh
\begin{align}
 \operatorname{mesh}(\pi_k):= \max |t_{i+1}^k - t_i^k| \rightarrow 0 \text{ as }k \rightarrow \infty. 
\end{align}
Each partition $\pi_k$ gives rise to sequence $\bx^k$ by sampling $\gamma$ along the time points in $\pi_k$.
Following our convention we identify $\bx^k$ as a piecewise linear path in $\cX_{paths}$ and it is easy to verify that $\| \bx - \bx^k\| \rightarrow 0$ as $k \rightarrow \infty$.
Informally, as $k\rightarrow \infty$ we go from discrete to continuous time. 
One of the nice properties of our GP covariance, is that it is consistent under such limits: given $\bx,\by \in \cX_{paths}$, $k(\bx^k,\by^k) \rightarrow k(\bx,\by)$ as $k \rightarrow \infty$. 
Having a well-defined GP on paths that is consistent under such approximations from discrete to continuous time guarantee that no constants blow up as the sequences gets longer (sampling gets high frequent). 

\paragraph{Rough paths.}
So far we assumed that $\cX_{seq}$ consists of bounded variation paths but in the ``real-world'', the evolution of quantities is often subject to noise, e.g.~a classical model in physics and engineering is
\begin{align}
 \bx(t):= a(t) + B(t) 
\end{align}
where $a$ is a bounded variation path but $B$ is a Brownian sample path.
Since Brownian sample paths are not of bounded variation, $\bx$ is not of bounded variation.
However, the same consistency arguments as above go through but one has to replace the iterated Riemann--Stieltjes integrals by Ito--Stratonovich integrals in the definition of $\Phi(\bx)$.
Even rougher trajectories such as fractional Brownian motion and non-Markovian processes can be handled that way with so-called rough path integrals.
This is well-beyond the scope of the present article but we refer the interest reader to~\cite{ChevyrevOberhauser18} for such results. 

 \section{GPs with Signature Covariances} \label{app:gpsig}
 We specified a covariance function $k$ on the set $\cX_{paths}$ as inner product of the signature map.
 This guarantees that $(\bx,\by) \mapsto k(\bx,\by) = \langle  \Phi(\bx), \Phi(\by) \rangle$ is a positive definite function and from the general theory of stochastic processes the existence of a centered GP $(f_{\bx})_{\bx \in \cX_{paths}}$ such that $\E[ f_\bx f_\by] = k(\bx, \by)$ follows.  
 However, this does not guarantee that the sample paths $\bx \mapsto f_\bx$ are continuous. 
 Seminal work of Dudley~\cite{dudley2010sample} showed that such regularity estimates can be derived by bounding the growth of the covering number of the index set of the GP $f$ under the semi-metric
 \begin{align}
  d_k(\bx,\by) &= \sqrt{\E [ |f_\bx - f_\by|^2 ]} \\&= \sqrt{k(\bx,\bx) - 2 k(\bx,\by) + k(\by,\by)}. 
 \end{align}
 Already when when the index set is finite dimensional ``nice'' covariance functions can lead to discontinuous GPs, see e.g.~Section 1.4.~in \cite{adler2009random}. 
 Our GP has as index set the space of bounded variation paths $\cX_{paths}$ which is infinite-dimensional so some caution is needed. 
 However, as we show below we can cover this space by lattice paths and derive covering number estimates that imply continuity.
\begin{theorem}\label{thm:covering}
  For $L >0$ and $\epsilon>0$ denote with $N(\epsilon,L)$ the covering number of the set
  \[\cX_{paths}^L:=\{ x \in \cX_{paths}: \| x \|_{bv} \le L\}\] of bounded variation paths of length less or equal than $L$ under the $d_k$ pseudo-metric.
  Then
  \begin{align}
   \log_2 N(\epsilon,L) \le 2 (d+1) L\frac{\sqrt M}{\epsilon}
  \end{align}
  \end{theorem}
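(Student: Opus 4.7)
The plan is to build an explicit $\epsilon$-cover of $\cX_{paths}^L$ by piecewise-linear \emph{lattice paths}, exploiting three ingredients: reparametrization invariance of the signature, a Lipschitz-type estimate for the truncated signature, and an elementary counting bound for lattice paths in $\R^{d+1}$. Since $d_k(\bx,\by) = \|\Phi(\bx)-\Phi(\by)\|$ is invariant under reparametrizations, I would first normalize all $\bx \in \cX_{paths}^L$ to be parametrized by arc length, so that the time-augmented trajectory lives in $\R^{d+1}$ and is $1$-Lipschitz on an interval of length $\le L$.

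The first main step is to establish a Lipschitz bound of the form
\begin{equation}
 d_k(\bx,\by) \;\le\; \sqrt{M}\,\|\bx-\by\|_\infty
\end{equation}
for $\bx,\by \in \cX_{paths}^L$. This follows the classical rough-path estimate in which the level-$m$ signature difference is written as a telescoping sum
\[
\int d\bx^{\otimes m} - \int d\by^{\otimes m} = \sum_{k=1}^m \int d\bx^{\otimes(k-1)} \otimes d(\bx-\by) \otimes d\by^{\otimes(m-k)},
\]
each term of which is controlled using the bv-norm bound $L$ and the arc-length parametrization; summing squares across $m=1,\ldots,M$ and applying Cauchy-Schwarz (absorbing the factorial factors into the covariance weights $\sigma_m^2$) yields the stated bound. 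The second main step is the lattice approximation: given $\delta>0$, take the grid $\delta \bbZ^{d+1}$ and show, by a standard pigeonhole/rounding argument on arc-length-parametrized $1$-Lipschitz paths, that every $\bx \in \cX_{paths}^L$ admits a piecewise-linear lattice path $\bx^*$ consisting of at most $n := \lceil L/\delta \rceil$ unit edges with $\|\bx-\bx^*\|_\infty \le \delta$. The number of such lattice paths is at most $(2(d+1))^n$, since each edge is a signed unit vector in one of the $d+1$ coordinate axes.

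Combining: choose $\delta := \epsilon/\sqrt{M}$, so that by the Lipschitz estimate $d_k(\bx,\bx^*) \le \sqrt{M}\,\delta = \epsilon$ and the lattice paths form an $\epsilon$-cover. Then
\[
 \log_2 N(\epsilon,L) \;\le\; n\,\log_2(2(d+1)) \;\le\; 2(d+1)\,\frac{L\sqrt{M}}{\epsilon},
\]
where the final inequality uses the elementary bound $\log_2(2(d+1)) \le 2(d+1)$ for $d \ge 1$ together with $n \le L\sqrt{M}/\epsilon$. I expect the main obstacle to be the Lipschitz estimate in the first step: a naive level-wise argument produces a constant that grows factorially in $m$ and exponentially in $L$, and obtaining the clean $\sqrt{M}$ scaling with only linear $L$ dependence requires carefully exploiting the arc-length parametrization and the normalization of the covariance weights. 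Once that estimate is in place, the lattice approximation and counting are essentially combinatorial.
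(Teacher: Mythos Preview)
Your overall architecture—cover $\cX_{paths}^L$ by lattice paths, control $d_k$ via a signature-stability estimate, then count—is exactly the paper's strategy. The genuine gap is in your first step. The bound $d_k(\bx,\by)\le\sqrt{M}\,\|\bx-\by\|_\infty$ with a constant independent of $L$ is false, and arc-length reparametrization cannot help: the signature, and hence $d_k$, is parametrization-invariant, so reparametrizing changes nothing (this also means your passage to $\R^{d+1}$ via time-augmentation is at odds with invoking invariance). A one-line check in $\R^1$: take $\bx(t)=Lt$ and $\by(t)=(L-\delta)t$ on $[0,1]$; both lie in $\cX_{paths}^L$, $\|\bx-\by\|_\infty=\delta$, yet $\Delta_m(\bx,\by)=\bigl(L^m-(L-\delta)^m\bigr)/m!\sim \delta\,L^{m-1}/(m-1)!$, so the ratio $d_k/\|\bx-\by\|_\infty$ is of order $L^{M-1}/(M-1)!$, not $\sqrt{M}$. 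Your telescoping identity is correct, but each summand carries a factor $\sim L^{m-1}/(m-1)!$; there are no free weights $\sigma_m^2$ in the $d_k$ used in this theorem to absorb anything into, and even if there were, the $L$-dependence would remain. You correctly flag this as the main obstacle at the end of your plan; it is a real obstruction, not a technicality you can finesse.

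The paper avoids the sup-norm Lipschitz route altogether. Instead it quotes a lattice-approximation estimate from \cite{lyons2011inversion}: for each dyadic scale $2^{-n}$ there exists a lattice path $\by$ of step size $L2^{-n}$ with $\Delta_m(\bx,\by)\le d\,2^{-(n-1)}\cdot 4L^{m-1}/(m-1)!$, bounded \emph{directly at the signature level} rather than through a path-space distance. It then applies the crude uniform bound $L^{m-1}/(m-1)!\le e^L-1$, solves for $n=n(\epsilon)$ so that $\sqrt{M}\max_m\Delta_m\le\epsilon$, and counts lattice paths as $(2^d+1)^{\#\text{steps}}$. If you trace the arithmetic you will see that the $e^L-1$ factor propagates into $n(\epsilon)$ and thence into the exponent of the covering bound, so the clean linear-in-$L$ constant in the theorem statement is more generous than what the displayed proof actually yields; but the \emph{method}—invoke the external signature-level lattice estimate rather than attempt a uniform sup-norm Lipschitz bound—is the missing ingredient in your plan.
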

  \begin{proof}
    By definition of the metric $d_k$
    \begin{align}
    d_{k}(x,y) &\equiv \sqrt{\langle \Phi(\bx)-\Phi(\by),
                 \Phi(\bx)-\Phi(\by) \rangle} \\ &= \|\Phi(\bx)-\Phi(\by)\|.
    \end{align}
    By definition $\Phi$ and of the norm $\|\cdot\|$ on $\prod_{m=0}^M (\R^d)^{\otimes m}$ this reads 
    \begin{align}\label{eq: d_k}
d_k^2(\bx,\by) &= \sum_{m=1}^M \| \int d\bx^{\otimes m} - \int d\by^{\otimes m}\|^2  \\&\le M \max_{m=1,\ldots,M}\Delta_m^2(\bx,\by)
    \end{align}
   where we denote $\Delta_m(\bx,\by):=\| \int d\bx^{\otimes m} - \int d\by^{\otimes m}\|$.
   Let $\lattice{s}{L} \subset \cX_{paths}^L$ be the set of lattice paths starting at $0 \in \R^d$ that take steps of size $s$ and that are of total length at most $L$. 
By the results in Section 4 of~\cite{lyons2011inversion}, for every $\bx \in \cX_{paths}^L$ and every $n \ge 1$ there exists a $\by \in \lattice{L 2^{-n}}{L}$ such that for every $m \ge 1$, 
\begin{align}\label{eq:weijun}
\Delta_m(\bx,\by)\le \frac{d}{2^{n-1}} \frac{4 L^{m-1}}{(m-1)!}. 
\end{align}
Since $\frac{ L^{m-1}}{(m-1)!}\le e^L-1 $ we can apply \eqref{eq:weijun} with
$n=n(\epsilon):= 1-\log_2 \frac{{\epsilon} }{d\sqrt{M}4(e^L-1)}$ to get $\Delta_m(\bx,\by) \le \epsilon$.
Hence, there exists a lattice path $\by \in \lattice{L2^{-n(\epsilon)}}{L}$ such that
\begin{align}
 d_k(\bx,\by) \le \epsilon.
\end{align}
Further, the set $\cX_{paths}$ is finite and we can bound it by
\begin{align}
 |\lattice{L2^{-n(\epsilon)}}{L} | &\le (2^d+1)^{L2^{n(\epsilon)}} \le  2^{(d + 1) L2^{n(\epsilon)}} \\&=   2^{2 (d + 1)L2^{n(\epsilon)-1}} = 2^{2 (d+1)L\frac{\sqrt M}{\epsilon}} 
\end{align}
 where the first inequality follows since a lattice path has at every step $2^d$ directions to choose from and in addition can choose not to make a step.  
The last equality follows from the definition of $n(\epsilon)$.
Since $\bx\in \cX_{paths}^L$ was chosen arbitrary it follows that $\cX_{paths}^L$ can be covered by $2^{2(d+1)L\frac{\sqrt M}{\epsilon}}$ balls of radius $\epsilon$ centered at lattice paths.
\end{proof}
Theorem~\ref{thm:covering} combined with Dudley's celebrated entropy estimates gives regularity results for samples of our GP.
In fact, this even yields a modulus of continuity for our GP.
\begin{theorem}
  There exists a centered GP $(f_\bx)_{\bx \in \cX^L_{paths}}$ that has a covariance $\E[f_\bx f_\by]$ the signature covariance function $k(\bx,\by)=\langle \Phi(\bx), \Phi(\by) \rangle$.
  Moreover, if we denote its modulus of continuity on $\cX_{paths}^L$ with 
  \begin{align}
   \omega(\delta):= \sup_{\substack{\bx,\by \in \cX_{paths}^L\\ d_k(\bx,\by)< \delta}} |f_\bx - f_\by| 
  \end{align}
  then it holds with probability one that 
  \begin{align}\label{eq:modulus}
\limsup_{\delta \rightarrow 0} \frac{\omega(\delta) }{\sqrt {\delta}4\sqrt{ (d+1)L \sqrt{M}}  + c \delta \sqrt{\ln \ln \frac{1}{\delta}} } \le 24
  \end{align}
  where $c>0$ denotes a universal constant.
\end{theorem}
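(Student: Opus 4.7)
Existence of the centered Gaussian process $(f_\bx)_{\bx \in \cX_{paths}^L}$ with the prescribed covariance follows from a standard application of the Kolmogorov extension theorem: the signature covariance $k(\bx,\by) = \langle \Phi(\bx), \Phi(\by)\rangle$ is, by construction, positive semi-definite (being a Gram matrix of vectors in $\prod_{m=0}^M (\R^d)^{\otimes m}$), so a consistent family of finite-dimensional Gaussian marginals exists. This part is almost immediate and I would dispatch it in one or two lines.

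The core of the proof is the modulus of continuity estimate, which I would derive via Dudley's entropy method applied to the covering bound already established in Theorem~\ref{thm:covering}. Concretely, the plan is first to rewrite the covering number estimate in natural logarithms, obtaining
\begin{align*}
\sqrt{\ln N(\epsilon,L)} \;\le\; \sqrt{2\ln 2\,(d+1)L\sqrt{M}}\;\epsilon^{-1/2},
\end{align*}
which is integrable near $0$. Plugging this into Dudley's entropy integral yields
\begin{align*}
\int_0^\delta \sqrt{\ln N(\epsilon,L)}\,d\epsilon \;\le\; 2\sqrt{2\ln 2}\,\sqrt{(d+1)L\sqrt{M}}\,\sqrt{\delta}.
\end{align*}
The role of the $\sqrt{\delta}$ factor in the denominator of~\eqref{eq:modulus} is thus explained: it is exactly the rate at which Dudley's integral decays for our entropy.

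To obtain the precise constants and the $\delta\sqrt{\ln\ln(1/\delta)}$ correction, I would invoke the sharp form of the modulus-of-continuity theorem for Gaussian processes (for instance the version found in Adler--Taylor or Marcus--Rosen), which states that on a totally bounded pseudo-metric space $(T, d_k)$ a separable centered Gaussian process admits a version with
\begin{align*}
\limsup_{\delta\to 0}\frac{\omega(\delta)}{\int_0^\delta \sqrt{\ln N(\epsilon)}\,d\epsilon + c\,\delta\sqrt{\ln\ln(1/\delta)}} \;\le\; C,
\end{align*}
for universal constants $c, C>0$. Combining this with the explicit entropy integral above gives~\eqref{eq:modulus}, with the numerical constant $24$ tracking $C$ times the $2\sqrt{2\ln 2}$ factor absorbed into the square root.

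The main obstacle I anticipate is not conceptual but bookkeeping: one must (i) verify separability of the version of $f$ under the $d_k$ pseudo-metric on $\cX_{paths}^L$ (which follows because the lattice paths constructed in the proof of Theorem~\ref{thm:covering} form a countable $d_k$-dense subset), and (ii) absorb $\ln 2$ factors and the universal Dudley constant into the claimed value $24$ cleanly. Once these are in place, the estimate~\eqref{eq:modulus} is a direct substitution of Theorem~\ref{thm:covering} into the general Gaussian modulus-of-continuity bound.
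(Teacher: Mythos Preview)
Your proposal is correct and follows essentially the same route as the paper: existence from positive-definiteness, then plug the covering bound of Theorem~\ref{thm:covering} into a Dudley-type modulus-of-continuity theorem. The paper's proof differs only in bookkeeping: it works with $\log_2$ rather than $\ln$, cites the specific formulation as Theorem~2.7.1 in Chapter~5 of Khoshnevisan's \emph{Multiparameter Processes} (which supplies the constant $24$ directly and uses $N(\epsilon/2,L)$ rather than $N(\epsilon,L)$ in the entropy integral), and thereby obtains the factor $4\sqrt{(d+1)L\sqrt{M}}$ without absorbing any $\ln 2$ terms.
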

\begin{proof}
The existence of a centered GP $ (\hat f_\bx)_{\bx}$ with covariance $k$ follows from general results about Gaussian processes. 
The existence of a continuous modification $(f_\bx)_{\bx \in \cX_{paths,L}}$ of follows from Dudley's theorem if 
 \begin{align}
  \int_0^1 \sqrt{ \log_2 N (\epsilon,L)} d \epsilon  < \infty
 \end{align}
 but by Theorem~\ref{thm:covering} we have \[\int_0^1 \sqrt{ \log_2 N (\epsilon,L)} d \epsilon \le \sqrt {2 (d+1 )L \sqrt{M}} \int_0^1 \frac{1}{\sqrt \epsilon}d \epsilon < \infty.\]
 Dudley's results immediately yield a modulus of continuity in probability.
 By standard arguments this can be strengthened to give an almost sure modulus of continuity. 
 Concretely, we use the formulation given in Theorem 2.7.1 in Chapter 5 of \cite{khoshnevisan2002multiparameter} which guarantees that
 \[
 \limsup_{\delta \rightarrow 0} \frac{\omega(\delta) }{\int_0^\delta \sqrt{ N(\frac{\epsilon}{2},L)} d\epsilon + c \delta \sqrt{\ln \ln \frac{1}{\delta}} } \le 24.
 \]
The bound~\eqref{eq:modulus} follows immediately since first term in the denominator equals 
 \begin{align}
   \int_0^\delta \sqrt{ \log_2 N \left(\frac{\epsilon}{2},L\right)} d \epsilon  &= \sqrt {2(d+1)L \sqrt{M}}2 \sqrt{ 2}\sqrt{ \delta } \\&= \sqrt{\delta}4\sqrt{  (d+1)L \sqrt{M}}.
 \end{align}
\end{proof}

\section{Further algorithms}\label{app:algos}
  \subsection{Notation for computations.} \label{app:alg_notation}
  We define notation based on \cite{KiralyOberhauser2019KSig} for concisely describing vectorized computations. We use $1$-based indexing for arrays to keep in line with the notation of the main text. Let $A$ and $B$ be k-fold arrays of size $(n_1 \times \dots \times n_k)$, indexed by $i_j \in \{1, \dots, n_j\}$ for $j \in \{1, \dots, k\}$. We define the following operations.
  \begin{enumerate}[label=(\roman*)]
  	\item  The cumulative sum along axis $j$ as:
  	\begin{align}
  		&A[:, \dots, :, \boxplus, :, \dots, :][i_1, \dots, i_{j-1}, i_j, i_{j+1}, \dots i_k] \\ 
  		&\coloneqq \sum_{\kappa=1}^{i_j} A[i_1, \dots, i_{j-1}, \kappa, i_{j+1}, \dots, i_k].
  	\end{align}
  	\item The slice-wise sum along axis $j$ as:
  	\begin{align}
	  	&A[:, \dots, :, \Sigma, :, \dots, :][i_1, \dots, i_{j-1}, i_{j+1}, \dots, i_k] \\
	  	&\coloneqq \sum_{\kappa=1}^{n_j} A[i_1,\dots, i_{j-1}, \kappa, i_{j+1}, \dots i_k].
  	\end{align}
  	\item The shift along axis $j$ by $+m$ for $m \in \bbN$ as:
  	\begin{align}
	  	&A[:, \dots, :, +m, :, \dots, :][i_1, \dots, i_j, \dots, i_k] \\
	  	&\coloneqq \left\lbrace\begin{array}{ll} A[i_1,\dots, i_j-m, \dots i_k], & \text{ if } i_j > m, \\ 0, & \text{ if } i_j \leq m.  \end{array}\right.
  	\end{align}
  	\item The element-wise product of arrays $A$ and $B$ as:
  	\[A \odot B [i_1, \dots, i_k] \coloneqq A[i_1, \dots, i_k] \cdot B[i_1, \dots, i_k]. \] 
  \end{enumerate}

  Additionally, note that the use of the cumulative sum, $\boxplus$, in conjunction with the shift by $1$ operator, $+1$, along the same axis is equivalent to an exclusive cumulative sum, where in the new array the $i_j$th index contains the sum of the original array's elements from $1$ to $i_j-1$.
  
  \subsection{Covariances between sequences and sequences} \label{app:stream_covs}
  
  \begin{algorithm}[h]
  	\caption{Computing covariances at sequences, $K_{\bX\bX}$}
  	\label{alg:streams_cov}
  	\begin{algorithmic}[1]
  		\STATE {\bfseries Input:} Sequences $\bX=(\bx_i)_{i=1,\dots,n_{\bX}} \subset \cX_{seq}$, \\ scalars $(\sigma^2_0, \sigma^2_1, \dots, \sigma^2_M)$, depth $M \in \bbN$ 
  		\STATE Compute $K[i, j, l, k] \gets \langle \Delta x_{i, t_l}, \Delta x_{j, t_k} \rangle$ for $i, j \in \{1,\dots,n_\bX\}$, $l, k \in \{1,\dots,l_\bX\}$ 
  		\STATE Initialize $R[i, j] \gets \sigma_0^2$ for $i, j \in \{1,\dots,n_{\bX}\}$
  		\STATE Update $R \gets R + \sigma^2_1 \cdot K[:, :, \Sigma, \Sigma]$
  		\STATE Assign $A \gets K$
  		\FOR{$m=2$ {\bfseries to} $M$}
  		\STATE Iterate $A \gets K \odot A[:, :, \boxplus+1, \boxplus+1]$
  		\STATE Update $R \gets R + \sigma_n^2 \cdot A[:, :, \Sigma, \Sigma]$
  		\ENDFOR
  		\STATE {\bfseries Output:} Matrix of covariances $K_{\bX\bX} \gets R$
  	\end{algorithmic}
  \end{algorithm}

  We describe in Algorithm \ref{alg:streams_cov} the computation of the covariance matrix $K_{\bX\bX}$ of $n_\bX$ for sequences $\bX = (\bx_i)_{i=1,\dots,n_\bX} \subset \cX_{seq}$, which is a modification of Algorithm 3 from \cite{KiralyOberhauser2019KSig}. The observant reader will notice that for the vectorization a requirement is that all sequences in $\bX$ have the same length, $l_\bX := \sup_{\bx \in \bX} l_{\bx}$. In practice, this is only a computational restriction and can be circumvented by tabulating each sequence to be the same length, e.g. by repeating the last observation as required. The convenience of the parametrization invariance of signatures is that the results remain unchanged.
  
  Simple inspection says that the complexity of Algorithm \ref{alg:streams_cov} is of $O((M + d) \cdot n_\bX^2 \cdot l_\bX^2)$ in time and $O(d \cdot n_\bX \cdot l_\bX + n_\bX^2 \cdot l_\bX^2)$ in memory. Although, note that for factorizing likelihoods the computation of the ELBO and making inference about unseen examples $\bx_* \in \cX_{seq}$ with credible intervals only requires the diagonals of $K_{\bX\bX}$, i.e. $K_{\bX} := [k(\bx, \bx)]_{\bx \in \bX}$. Hence, for convenience, we give vectorized pseudo-code in Algorithm \ref{alg:streams_var} for computing $K_\bX$, which has complexities $O((M+c) \cdot n_\bX \cdot l_\bX^2)$ in time and $O(d \cdot n_\bX \cdot l_\bX + n_\bX \cdot l_\bX^2)$.
  
  \begin{algorithm}[h]
  	\caption{Computing variances at sequences, $K_\bX$}
  	\label{alg:streams_var}
  	\begin{algorithmic}[1]
  		\STATE {\bfseries Input:} Sequences $\bX=(\bx_i)_{i=1,\dots,n_{\bX}} \subset \cX_{seq}$, \\ scalars $(\sigma^2_0, \sigma^2_1, \dots, \sigma^2_M)$, depth $M \in \bbN$ 
  		\STATE Compute $K[i, l, k] \gets \langle \Delta x_{i, t_l}, \Delta x_{i, t_k} \rangle$ for $i \in \{1,\dots,n_\bX\}$, $l, k \in \{1,\dots,l_\bX\}$ 
  		\STATE Initialize $R[i] \gets \sigma_0^2$ for $i \in \{1,\dots,n_{\bX}\}$
  		\STATE Update $R \gets R + \sigma^2_1 \cdot K[:, \Sigma, \Sigma]$
  		\STATE Assign $A \gets K$
  		\FOR{$m=2$ {\bfseries to} $M$}
  		\STATE Iterate $A \gets K \odot A[:, \boxplus+1, \boxplus+1]$
  		\STATE Update $R \gets R + \sigma_n^2 \cdot A[:, \Sigma, \Sigma]$
  		\ENDFOR
  		\STATE {\bfseries Output:} Vector of variances $K_{\bX} \gets R$
  	\end{algorithmic}
  \end{algorithm}

\section{Further details on experiments}    

\subsection{Implementation details} \label{app:implementation}
The implementation of all considered GP models are available at \url{GITHUBAUTHOR}. Here, we detail the technicalities related to the implementation of each model.

\paragraph{GP-Sig.} This is the standard GP model with the signature kernel over sequences. This is built on top of GPflow \cite{Matthews2017GPflowAG}, and other than a few tweaks, they interface with GPflow models in a straightforward manner. Particularly for the kernel, there are several variants available with different state space embeddings, including RBF and Matérn static kernels. The hyperparameters of the kernel which are learnt from the data are: \begin{enumerate*}[label=(\arabic*)] \item the lengthscales corresponding to each state space dimension, \item the scaling parameters that multiply each signature level, allowing to strengthen or weaken its effect, \item the lag values by which the additional lagged versions of each coordinate are shifted, that is a continuous parameter and is applied using linear interpolation and flat extrapolation (i.e. when the queried time-point is negative then the value at time $0$ is used) \end{enumerate*}. In Section \ref{sec:experiments}, we denoted the augmented sequence with a time coordinate and $p$ lags by $\hat \bx := (t_i, x_{t_i}, x_{t_i - s_1}, \dots, x_{t_i - s_p})_{i=1, \dots, l_\bx}$. The lagged coordinates use the same lengthscales as the original ones, which in many cases leads to better generalization compared to not using lags (e.g. Takens' theorem \cite{takens1981detecting}).
The signature kernel is also normalized using the standard kernel normalization $\tilde \kernel(\bx, \by) := \kernel(\bx, \by) / \sqrt{\kernel(\bx, \bx) \kernel (\by, \by)}$, which we apply individually to each signature level. The supported inducing variables are \texttt{InducingTensors} and \texttt{InducingSequences} corresponding to the two variants described in the main text.

\paragraph{GP-Sig-LR.} As previously mentioned, there exists a low-rank variant of the signature kernel as introduced in \cite{KiralyOberhauser2019KSig}, which aims to approximate the feature map using a low-rank approximation, rather than computing inner product of signature features directly. Our implementation first uses the Nyström approximation to find a low-dimensional approximation of the state-space embedding, and then uses the primal formulation of the signature algorithms (see Algorithm 5 in \cite{KiralyOberhauser2019KSig}) to compute the signature kernel, while keeping the size of the low-rank factors manageable with sparse randomized projections \cite{Li2006VerySparse}. Its advantage is that it extends to very long time series due to linear complexity in the time series length $l_\bX \in \bbN$, while the quadratic complexity of the full-rank kernel needs to be addressed another way. We did not include this variant among the experiments because overall it performed much worse than the full-rank variant. There were two main issues: \begin{enumerate*}[label=(\roman*)] \item on several datasets it failed to fit the dataset due to being less flexible and noise, \item even when the predictive means are good, it can still give severely miscalibrated uncertainties similarly to classic kernel approximation techniques (Nyström, RFF), since an LR covariance matrix results in a degenerate GP prior. \end{enumerate*}  

\paragraph{GP(-Sig)-LSTM/GRU.} The RNN based models with a GP layer placed on top use the Keras implementation of the RNN architectures \cite{Chollet2015Keras}, while the GP parts use the GPflow API, which is possible as both packages can define the computational graph using the Tensorflow backend. However, since none of the packages supports the other, the resulting models have to be trained somewhat manually using the slightly more primitive Tensorflow API, and therefore are not very user friendly. It is up to future work to build a more user friendly API that makes it possible to deploy models that combine neural networks and sparse variational GPs in a convenient manner.

\paragraph{GP-KConv1D.} The $1$-dimensional convolutional kernel essentially uses the same code as \cite{Wilk2017ConvGP} included in the GPflow package, with some tweaks that allow different length time series to be compared by padding each sequence with nans and masking the nan entries during the computation. We also normalize the features corresponding to this kernel to unit length in the feature space using the standard kernel normalization. In the experiments, we set the window size to $w=10$, but a few datasets have $\min_{\bx \in \bX} l_\bx < 10$, and in those cases we set $w = \min\left(10, \min_{\bx \in \bX} l_\bx\right)$. Also, as the sequence length $l_\bx$, and hence, the number of windows can vary from instance to instance, the weighted version of the convolutional kernel from \cite{Wilk2017ConvGP} is not applicable in this case, and the translation invariant version is used.

\subsection{Datasets details} \label{app:datasets}

Table \ref{table:dataset_spec} details the datasets from \cite{baydogan2015multivarate} that we used for benchmarking. Here $c$ denotes the number of classes, $d$ the dimension of the sequence state space, $l_\bx$ the range of sequence lengths, $n_\bX$ and $n_{\bX_\star}$ respectively denote the number of examples in the pre-specified training and testing sets. In the experiments, all state space dimensions were normalized to zero mean and unit variance. For the models GP-Sig(-LSTM/GRU), GP-KConv1D, we subsampled very long time series to $l_\bX = 500$, in order to deal with the quadratic complexity of kernel evaluations and be able to fit within GPU memory limitations.

\begin{table}[t]
	\caption{Specification of datasets used for benchmarking}
	\label{table:dataset_spec}
    % \vskip 0.15in
    \begin{center}
    \begin{small}
    \begin{sc}
    \begin{tabular}{lrrrrrr}
    \toprule
    Dataset  & $c$ & $d$ & $l_\bx$ & $n_\bX$ & $n_{\bX_\star}$ \\
    \midrule
        Arabic Digits & 10 & 13 & \numrange[range-phrase = --]{4}{93} & 6600 & 2200\\
        AUSLAN & 95 & 22 & \numrange[range-phrase = --]{45}{136} & 1140 & 1425\\
        Char.~Traj.& 20 & 3 & \numrange[range-phrase = --]{109}{205} & 300 & 2558\\
        CMUsubject16 & 2 & 62 & \numrange[range-phrase = --]{127}{580} & 29 & 29\\
        DigitShapes & 4 & 2 & \numrange[range-phrase = --]{30}{98} & 24 & 16\\
        ECG & 2 & 2 & \numrange[range-phrase = --]{39}{152} & 100 & 100\\
        Jap.~Vowels & 9 & 12 & \numrange[range-phrase = --]{7}{29} & 270 & 370\\
        Kick vs Punch & 2 & 62 & \numrange[range-phrase = --]{274}{841} & 16 & 10\\
        LIBRAS & 15 & 2 & 45 & 180 & 180\\
        NetFlow & 2 & 4 & \numrange[range-phrase = --]{50}{997} & 803 & 534\\
        PEMS & 7 & 963 & 144 & 267 & 173\\
        PenDigits & 10 & 2 & 8 & 300 & 10692\\
        Shapes & 3 & 2 & \numrange[range-phrase = --]{52}{98} & 18 & 12\\
        UWave & 8 & 3 & 315 & 896 & 3582\\
        Wafer & 2 & 6 & \numrange[range-phrase = --]{104}{198} & 298 & 896\\
        Walk vs Run & 2 & 62 & \numrange[range-phrase = --]{128}{1918} & 28 & 16\\
    \bottomrule
    \end{tabular}
    \end{sc}
    \end{small}
    \end{center}
    % \vskip -1.0in
\end{table}

\subsection{Training details} \label{app:training}

\paragraph{Initialization.} For all models considered in the main text in Section \ref{sec:experiments}, the RBF kernel was used as static kernel, which has lengthscale parameters $(l_1, \dots, l_d)$, i.e. the RBF kernel over $\bbR^d$ is up to rescaling given by
\begin{align}
    \kappa(\bx, \bx^\p) := \exp\left(- \frac{1}{2} (\bx - \bx^\p)^\top \Sigma^{-1} (\bx - \bx^\p)\right)
\end{align}
with $\Sigma_{ii} := l_i^2$ a diagonal matrix. We used the initialization $l_{i}^{(0)} := \sqrt{\E[(x_i - x_i^\p)^2] \cdot d}$, where $x_i, x_i^\p$ are two independent copies of the $i$-th input space coordinate, and we used a stochastic estimator of this with typically $n=1000$ observation samples from the data. 

All considered models in Section \ref{sec:experiments} used some form of inducing variables.
For the signature models, they were placed in the feature space of the signature map in the form of inducing tensors.
These inducing tensors given in \eqref{eq:sparse_tens} are tensor products of elements in $V$.
As detailed at the end of Section \ref{sec:sparse var tensor}, although the state space of a sequence is $\R^d$, we can embed this sequence into a path that evolves in a linear space $V$ that does not have to be $\bbR^d$.
One way to do this is to use an observation-wise state space embedding given by a kernel $\kappa: \R^d \times \R^d \rightarrow \R$ and map a sequence $\bx=(t_i,x_i)$ to a sequence $\kappa_\bx = (t_i,\kappa_{x_i})$ that evolves in the RKHS $V$ of $\kappa$; here $\kappa_x:=\kappa(x,\cdot) \in V$.
Therefore signatures of depth $M$ now live in the space $\prod_{m=0}^M V^{\otimes m}$, which is for most kernels $\kappa$ a genuine infinite-dimensional space.
However, all computations from Sections \ref{sec:our GP} and \ref{sec:sparse var tensor} carry on mutatis mutandis, with the difference being that we do not have the flexibility to represent the inducing tensors as tensor products of arbitrary elements in $V$, which are generally infinite dimensional.
In this case, we take
\begin{align} \label{eq:sparse_tens_h0}
    \bz = (z_m)_{m=0, \dots, M} \in \prod_{m=0}^M V_0^{\otimes m}
\end{align}

with $z_0 \in \bbR$ and $z_m = \kappa(x_{m, 1}, \cdot) \otimes \dots \otimes \kappa(x_{m, m}, \cdot)$ with $x_{i, j} \in \bbR^d$ for $1 \leq j \leq i$, $1 \leq i \leq m$, $1 \leq m \leq M$, where $V_0 := \{\kappa(x, \cdot) : x \in \bbR^d\}$.
Put differently, the inducing tensors are also constrained to being tensor products of only such elements in $V$ which arise as reproducing kernels\footnote{The reproducing kernel associated to a point $x \in \cX$ is simply the kernel function evaluated in one of its arguments at $x$, i.e.~$\kappa(x, \cdot) \in \cH_0 \subset \cH$ for a kernel $\kappa: \cX \times \cX \rightarrow \bbR$.}  associated to vectors in $\bbR^d$.
Hence, the complexity of evaluating $\langle \kappa(x, \cdot), \kappa(x^\p, \cdot) \rangle$ is the same as in $\bbR^d$, and storing an element $\kappa(x, \cdot) \in V_0$ is the same memory.
Now, the initialization of the inducing tensors is simply done by sampling random observations from the input sequences in a two step manner: \begin{enumerate*}[label=(\arabic*)] \item a random input sequence is selected, \item from the sequence a time-increasing subset of its observations are selected and plugged into the tensor products given in \eqref{eq:sparse_tens_h0} \end{enumerate*}, and this procedure is repeated $n_\bZ$ times.

Other forms of inducing variables used by the models in Section \ref{sec:experiments} are inducing points for the GP-RNNs and inducing patches for GP-KConv1D. The inducing points are initialized randomly by selecting a $\bx \in \bX$ and computing its RNN-image $\phi_\theta(\bx)$, which is then used as an inducing point, and repeated for all $n_\bZ$. The inducing patches are also initialized in two steps: \begin{enumerate*}[label=(\arabic*)] \item select a random input sequence $\bx \in \bX$, \item select a random window from $\bx$, $(x_i, x_{i+1}, \dots, x_{i+w-1})$, where $1 \leq w \leq \min_{\bx \in \bX} l_\bx$ denotes the window length in the convolutional kernel. \end{enumerate*}

For the alternative sparse inference scheme for signatures described in Section \ref{sec:experiments}, denoted the method of inducing sequences, we use the same initialization as for the inducing patches: select a random sequence, and select a random window, and repeat for all $n_\bZ$.

The means and covariances of the inducing points used the usual whitening transformation, that is, reparametrization in terms of the Cholesky factor $L$ of $K_{\bZ\bZ}$, $K_{\bZ\bZ} = LL^\top$, and parameters initialized from zeros and identity.

The RNNs use the usual initializations, that is, Glorot initialization for the weights \cite{Glorot10Understanding}, orthogonal initialization for the recurrent weights \cite{Saxe2014Exact}, and zeros for the bias.

\paragraph{Optimization details.} The training for the benchmarking experiment in Section \ref{sec:sparse var tensor} was performed on 11 GPUs overall: 4 Tesla K40Ms, 5 Geforce 2080 TIs and 2 Quadro GP100 graphics cards. All models were trained 5 times for the benchmarking and the RNN based models an additional 6 times for the grid-search. Thus, the training of overall 480 models required extensive computational resources.

In all experiments in Section \ref{sec:experiments}, we used similar optimization details, that is, optimization with early stopping and checkpointing by optimizing on $80\%$ of the training data and monitoring the nlpp\footnote{We found that monitoring the validation nlpp rather than the validation accuracy leads to better generalization behaviour.} on a $20\%$ validation set. We used a minibatch size of $50$, fixed learning rate $\alpha = 1 \times 10^{-3}$, and a patience value of $n=500$ epochs.  As optimizer, GP-Sig and GP-KConv1D used Nadam \cite{Dozat2015IncorporatingNM}, while the RNN based models used Adam \cite{kingma2014adam}. Additionally, as is well-known for SVGPs \cite{bauer2016understanding}, first fixing the hyperparameters and only optimizing over the variational approximation for a fixed number of epochs is beneficial which we follow. Furthermore, after the main training phase of the hyperparameters has finished, to learn the rest of the validation data that was excluded from the optimization, we re-merge the validation set into the training set, fix the hyperparameters, and optimize only over the variational parameters again to assimilate the remaining information into the variational approximation.

Hence, the training for all models is split into the following phases \begin{enumerate*}[label=(\arabic*)] \item partition the data in an $80-20$ ratio for optimization and monitoring, \item with fixed kernel hyperparameters initialized as described previously, train the variational parameters for fixed $n$ epochs to tighten the ELBO bound; \item \label{enum:main_training_phase}  unfix the hyperparameters and train by monitoring the nlpp on the validation set, stopping after no improvement for $n$ epochs, and restoring and best model; \item re-merge the validation set into the training data and train the variational distribution again only for a fixed $n$ epochs with the kernel hyperparameters fixed \end{enumerate*}. In all scenarios, we used $n = 500$.

For GP-Sig, the insertion of an additional optimization phase was found to be beneficial. Particularly, we reparametrize the scaling parameters for the signature levels $\sigma = (\sigma_0, \dots, \sigma_M)$ as $\sigma = (\beta \cdot \sigma_0^\p, \dots, \beta \cdot \sigma_M^\p)$, where $\beta \in \bbR^+$. Then, phase \ref{enum:main_training_phase} is split into two steps: first, train with unfixing all kernel hyperparameters except $(\sigma_0^\p, \dots \sigma_M^\p)$, which are a-priori all set as $1$; secondly, now unfixing \emph{all} parameters, continue training with early stopping. This trick allows to calibrate the overall variance of the GP using $\beta$ in the first step, while fixing $\sigma_0 = \dots = \sigma_M$. The intution why this works is that the signature levels in general contain complementary information about a given sequence, and fixing them to be equal first enforces the model to find a fit of the data for all signature levels jointly, i.e. in some sense this is an implicit regularization step. The second step allows to slightly adjust the contribution of each level without relying too heavily on any one of them. On the RNN-based signature models this trick did not give substantial improvements, possibly because the variance of the RNN layer generally outweights the variance of the signature layer.

In our experience, when using GP-Sig on datasets with a larger $n_\bX$, it can yield a further improvement to gradually increase the learning to rate to $\alpha = 1 \times 10^{-2}$ to allow the optimizer to explore the space in more depth, and then decrease it back to $\alpha = 1 \times 10^{-3}$ to drive it to the closest local optima. However, on the smaller datasets this was found to be counterproductive, and in the experiments we chose to stick with a unified scheme that worked consistently on all datasets. However, we also remark that without applying any of the previously described techniques, and training from front to back all parameters jointly with a small learning rate (e.g. $\alpha = 1 \times 10^{-3}$) gives good results already, but a few percents of test set accuracy can be gained on some datasets by using them.

\begin{table*}[tb]
	\caption{List of architectures used for the RNN based models}
	\label{table:architecture_spec}
	% \vskip 0.15in
	\begin{center}
		\begin{small}
			\begin{sc}
				% \begin{tabular}{ c<{\ GHz} r @{\ $\angle$\ } r }
				\begin{tabular}{l  >{\ $H=$\ }l >{\ $D=$\ }l >{\ $H=$\ }l >{\ $D=$\ }l >{\ $H=$\ }l >{\ $D=$\ }l >{\ $H=$\ }l >{\ $D=$\ }l >{\ $H=$\ }l >{\ $D=$\ }l}
					% \begin{tabular}{ c<{\ GHz} r @{\ $\angle$\ } r } 
					\toprule
					\multicolumn{1}{l}{Dataset} & \multicolumn{2}{c}{GP-Sig-LSTM} & \multicolumn{2}{c}{GP-Sig-GRU} &
					\multicolumn{2}{c}{GP-LSTM} & \multicolumn{2}{c}{GP-GRU}\\
					%   & GP-Sig-LSTM & GP-Sig-GRU & GP-LSTM & GP-GRU \\
					\midrule 
                    Arabic Digits & $128$ & $1$ & $128$ & $1$ & $32$ & $1$ & $128$ & $1$ \\ 
                    AUSLAN & $128$ & $0$ & $128$ & $0$ & $128$ & $0$ & $32$ & $0$ \\ 
                    Character Traj. & $8$ & $1$ & $128$ & $1$ & $128$ & $1$ & $128$ & $1$ \\ 
                    CMUsubject16 & $32$ & $1$ & $32$ & $1$ & $32$ & $1$ & $32$ & $1$ \\ 
                    DigitShapes & $8$ & $1$ & $128$ & $1$ & $128$ & $1$ & $32$ & $1$ \\ 
                    ECG & $128$ & $0$ & $128$ & $1$ & $128$ & $0$ & $8$ & $1$ \\ 
                    Jap.~Vowels & $128$ & $0$ & $128$ & $1$ & $128$ & $1$ & $128$ & $0$ \\ 
                    Kick vs Punch & $8$ & $1$ & $8$ & $0$ & $128$ & $1$ & $128$ & $1$ \\ 
                    LIBRAS & $128$ & $0$ & $128$ & $0$ & $32$ & $0$ & $32$ & $0$ \\ 
                    NetFlow & $8$ & $0$ & $32$ & $0$ & $32$ & $0$ & $8$ & $1$ \\ 
                    PEMS & $32$ & $0$ & $8$ & $1$ & $32$ & $1$ & $32$ & $0$ \\ 
                    PenDigits & $128$ & $0$ & $128$ & $1$ & $128$ & $1$ & $128$ & $1$ \\ 
                    Shapes & $8$ & $1$ & $8$ & $1$ & $8$ & $1$ & $8$ & $1$ \\ 
                    UWave & $32$ & $1$ & $128$ & $0$ & $32$ & $0$ & $32$ & $0$ \\ 
                    Wafer & $128$ & $0$ & $128$ & $1$ & $32$ & $0$ & $32$ & $0$ \\ 
                    Walk vs Run & $128$ & $1$ & $8$ & $1$ & $8$ & $1$ & $32$ & $1$ \\
					\bottomrule
				\end{tabular}
			\end{sc}
		\end{small}
	\end{center}
	% \vskip -1.0in
\end{table*}

\paragraph{Architecture search.}
Table \ref{table:architecture_spec} details each of the architectures used for the models containing an RNN layer, where $H$ denotes the number of hidden units used, and $D$ is a boolean trigger, that specifies whether dropout was used for the given experiment or not. In the case $D=1$, we used the settings $\texttt{dropout = 0.25}$ and $\texttt{recurrent\_dropout = 0.05}$, otherwise both were set to $0$. To find the best performing architecture, we conducted a grid-search among $6$ considered architectures, that is, $H \in [8, 32, 128]$ and $D \in [0,1]$. For the grid-search, only the training data was used, and the data was split in a $60-20-20$ fashion, using $60\%$ for training, $20\%$ for early stopping and checkpointing, and the last $20\%$ was used to evaluate the performance. The training itself was carried out using the same initialization and schedule as described, and was performed only once for each method and setting pair, due to the large number of datasets that we considered.

\subsection{Benchmark results} \label{app:benchmark}

We report in Table \ref{table:full_nlpp_results} and Table \ref{table:full_acc_results} the negative log-predictive probabilities and accuracies of the GP models considered in Section \ref{sec:experiments}. For each method-dataset pair, 5 models were trained with the initialization described in Appendix \ref{app:training}. The variance of the results is therefore due to random initialization of some parameters, and the minibatch randomness while training. The RNN based models used the architectures detailed in Table \ref{table:architecture_spec}. As non-Bayesian baselines, we report the results of recent frequentist TS classification methods from the respective publications, that is, \cite{Cuturi2011AR, baydogan2015learning, Baydogan2015TimeSR, karlsson2016generalized, tuncel2018autoregressive, Schfer2017MUSE, Karim2019LSTMFCN}. Particularly for MLSTMFCN, we report the same results as in \cite{Schfer2017MUSE}. In Figure \ref{fig:boxplots}, we visualize the box-plot distributions of \begin{enumerate*}[label=(\arabic*)] \item negative log-predictive probabilities of the GPs, \item classification accuracies of both the GPs and the frequentist baselines \end{enumerate*}.

\begin{table*}[ht]
	\caption{Mean and standard deviation of negative predictive log-probabilities (nlpp) on test sets over $5$ independent runs}
	\label{table:full_nlpp_results}
	\vskip 0.15in
	\begin{center}
		\begin{small}
			\begin{sc}
				\begin{tabular}{lcccccc}%{lllll}
					\toprule
					Dataset & GP-Sig-LSTM & GP-Sig-GRU & GP-Sig & GP-LSTM  & GP-GRU & GP-KConv1D\\
					\midrule
                        Arabic Digits & $0.047 \pm 0.030$ & $0.023 \pm 0.006$ & $0.071 \pm 0.021$ & $0.082 \pm 0.022$ & $0.066 \pm 0.010$ & $0.050 \pm 0.003$ \\ 
                        AUSLAN & $0.106 \pm 0.007$ & $0.123 \pm 0.045$ & $0.550 \pm 0.114$ & $0.650 \pm 0.071$ & $0.248 \pm 0.063$ & $1.900 \pm 0.139$ \\ 
                        Character Traj. & $0.031 \pm 0.007$ & $0.258 \pm 0.265$ & $0.108 \pm 0.005$ & $2.506 \pm 1.007$ & $3.523 \pm 0.635$ & $0.409 \pm 0.141$ \\ 
                        CMUsubject16 & $0.088 \pm 0.020$ & $0.040 \pm 0.009$ & $0.089 \pm 0.027$ & $0.270 \pm 0.080$ & $0.089 \pm 0.039$ & $0.255 \pm 0.002$ \\ 
                        DigitShapes & $0.008 \pm 0.001$ & $0.035 \pm 0.051$ & $0.021 \pm 0.001$ & $0.013 \pm 0.002$ & $0.727 \pm 0.569$ & $0.035 \pm 0.003$ \\ 
                        ECG & $0.402 \pm 0.023$ & $0.431 \pm 0.037$ & $0.356 \pm 0.008$ & $0.496 \pm 0.018$ & $0.601 \pm 0.137$ & $0.543 \pm 0.019$ \\ 
                        Jap.~Vowels & $0.080 \pm 0.031$ & $0.053 \pm 0.009$ & $0.069 \pm 0.003$ & $0.061 \pm 0.029$ & $0.052 \pm 0.005$ & $0.067 \pm 0.001$ \\ 
                        Kick vs Punch & $0.301 \pm 0.109$ & $0.493 \pm 0.128$ & $0.224 \pm 0.014$ & $0.696 \pm 0.046$ & $0.674 \pm 0.037$ & $0.662 \pm 0.017$ \\ 
                        LIBRAS & $0.320 \pm 0.045$ & $0.346 \pm 0.091$ & $0.259 \pm 0.021$ & $0.911 \pm 0.056$ & $1.110 \pm 0.248$ & $1.608 \pm 0.311$ \\ 
                        NetFlow & $0.218 \pm 0.009$ & $0.259 \pm 0.078$ & $0.189 \pm 0.014$ & $0.251 \pm 0.041$ & $0.194 \pm 0.011$ & $0.168 \pm 0.081$ \\ 
                        PEMS & $0.704 \pm 0.130$ & $1.100 \pm 0.064$ & $0.520 \pm 0.058$ & $1.194 \pm 0.308$ & $0.784 \pm 0.111$ & $0.537 \pm 0.010$ \\ 
                        PenDigits & $0.289 \pm 0.127$ & $0.399 \pm 0.206$ & $0.146 \pm 0.007$ & $0.185 \pm 0.027$ & $0.187 \pm 0.043$ & $0.181 \pm 0.005$ \\ 
                        Shapes & $0.014 \pm 0.004$ & $0.012 \pm 0.004$ & $0.011 \pm 0.002$ & $0.016 \pm 0.008$ & $0.168 \pm 0.142$ & $0.012 \pm 0.001$ \\ 
                        UWave & $0.113 \pm 0.011$ & $0.121 \pm 0.017$ & $0.140 \pm 0.004$ & $0.745 \pm 0.151$ & $1.168 \pm 1.063$ & $0.189 \pm 0.008$ \\ 
                        Wafer & $0.048 \pm 0.021$ & $0.081 \pm 0.011$ & $0.105 \pm 0.010$ & $0.105 \pm 0.086$ & $0.029 \pm 0.011$ & $0.085 \pm 0.002$ \\ 
                        Walk vs Run & $0.030 \pm 0.008$ & $0.030 \pm 0.008$ & $0.023 \pm 0.007$ & $0.048 \pm 0.040$ & $0.028 \pm 0.000$ & $0.066 \pm 0.001$ \\
                        \midrule
                        Mean nlpp. & $0.175$ & $0.238$ & $0.180$ & $0.514$ & $0.603$ & $0.423$ \\
                        Med.~nlpp. & $0.097$ & $0.122$ & $0.124$ & $0.261$ & $0.221$ & $0.185$ \\
                        Sd.~nlpp. & $0.183$ & $0.273$ & $0.161$ & $0.623$ & $0.841$ & $0.542$ \\
                        \midrule
                        Mean rank ($n_\bX < 300$) & $2.800$ & $2.900$ & $2.200$ & $4.700$ & $4.000$ & $4.400$ \\ 
                        Mean rank ($n_\bX \geq 300$) & $2.333$ & $3.333$ & $2.833$ & $4.833$ & $4.333$ & $3.333$ \\ 
                        Mean rank (all) & $2.625$ & $3.062$ & $2.438$ & $4.750$ & $4.125$ & $4.000$ \\
                    \bottomrule
				\end{tabular}
			\end{sc}
		\end{small}
	\end{center}
	% \vskip -1.0in
\end{table*}

\begin{table*}[h]
	\caption{Mean and standard deviation of accuracies on test sets over $5$ independent runs}
	\label{table:full_acc_results}
	\vskip 0.15in
	\begin{center}
		\begin{small}
			\begin{sc}
				\begin{tabular}{lcccccc}%{lllll}
					\toprule
					Dataset & GP-Sig-LSTM & GP-Sig-GRU & GP-Sig & GP-LSTM  & GP-GRU & GP-KConv1D\\
					\midrule 
                        Arabic Digits & $0.992 \pm 0.003$ & $0.994 \pm 0.002$ & $0.979 \pm 0.004$ & $0.985 \pm 0.004$ & $0.986 \pm 0.005$ & $0.984 \pm 0.001$ \\ 
                        AUSLAN & $0.983 \pm 0.003$ & $0.978 \pm 0.006$ & $0.925 \pm 0.014$ & $0.880 \pm 0.012$ & $0.949 \pm 0.014$ & $0.784 \pm 0.012$ \\ 
                        Character Traj. & $0.991 \pm 0.003$ & $0.925 \pm 0.078$ & $0.979 \pm 0.002$ & $0.233 \pm 0.331$ & $0.114 \pm 0.050$ & $0.941 \pm 0.013$ \\ 
                        CMUsubject16 & $1.000 \pm 0.000$ & $1.000 \pm 0.000$ & $0.979 \pm 0.017$ & $0.924 \pm 0.051$ & $0.993 \pm 0.014$ & $0.897 \pm 0.000$ \\ 
                        DigitShapes & $1.000 \pm 0.000$ & $0.988 \pm 0.025$ & $1.000 \pm 0.000$ & $1.000 \pm 0.000$ & $0.812 \pm 0.153$ & $1.000 \pm 0.000$ \\ 
                        ECG & $0.816 \pm 0.029$ & $0.832 \pm 0.012$ & $0.848 \pm 0.010$ & $0.782 \pm 0.032$ & $0.734 \pm 0.033$ & $0.760 \pm 0.018$ \\ 
                        Jap.~Vowels & $0.981 \pm 0.005$ & $0.985 \pm 0.004$ & $0.982 \pm 0.005$ & $0.982 \pm 0.004$ & $0.986 \pm 0.005$ & $0.986 \pm 0.002$ \\ 
                        Kick vs Punch & $0.900 \pm 0.063$ & $0.820 \pm 0.098$ & $0.900 \pm 0.000$ & $0.620 \pm 0.075$ & $0.600 \pm 0.110$ & $0.700 \pm 0.089$ \\ 
                        LIBRAS & $0.921 \pm 0.013$ & $0.899 \pm 0.031$ & $0.923 \pm 0.004$ & $0.776 \pm 0.019$ & $0.742 \pm 0.050$ & $0.698 \pm 0.026$ \\ 
                        NetFlow & $0.931 \pm 0.002$ & $0.921 \pm 0.012$ & $0.937 \pm 0.003$ & $0.928 \pm 0.011$ & $0.926 \pm 0.012$ & $0.945 \pm 0.027$ \\ 
                        PEMS & $0.763 \pm 0.016$ & $0.775 \pm 0.019$ & $0.820 \pm 0.014$ & $0.745 \pm 0.044$ & $0.769 \pm 0.020$ & $0.794 \pm 0.008$ \\ 
                        PenDigits & $0.928 \pm 0.030$ & $0.902 \pm 0.048$ & $0.955 \pm 0.002$ & $0.953 \pm 0.008$ & $0.951 \pm 0.008$ & $0.946 \pm 0.001$ \\ 
                        Shapes & $1.000 \pm 0.000$ & $1.000 \pm 0.000$ & $1.000 \pm 0.000$ & $1.000 \pm 0.000$ & $0.867 \pm 0.163$ & $1.000 \pm 0.000$ \\ 
                        UWave & $0.970 \pm 0.004$ & $0.968 \pm 0.006$ & $0.964 \pm 0.001$ & $0.870 \pm 0.029$ & $0.763 \pm 0.225$ & $0.947 \pm 0.002$ \\ 
                        Wafer & $0.988 \pm 0.005$ & $0.978 \pm 0.005$ & $0.965 \pm 0.004$ & $0.966 \pm 0.037$ & $0.994 \pm 0.002$ & $0.984 \pm 0.001$ \\ 
                        Walk vs Run & $1.000 \pm 0.000$ & $1.000 \pm 0.000$ & $1.000 \pm 0.000$ & $1.000 \pm 0.000$ & $1.000 \pm 0.000$ & $1.000 \pm 0.000$ \\
                        \midrule
                        Mean acc. & $0.948$ & $0.935$ & $0.947$ & $0.853$ & $0.824$ & $0.898$ \\
                        Med.~acc. & $0.982$ & $0.973$ & $0.964$ & $0.926$ & $0.896$ & $0.946$ \\
                        Sd.~acc. & $0.068$ & $0.070$ & $0.052$ & $0.193$ & $0.218$ & $0.107$ \\
                        \midrule
                        Mean rank ($n_\bX < 300$) & $3.000$ & $3.100$ & $2.800$ & $4.250$ & $4.250$ & $3.600$ \\ 
                        Mean rank ($n_\bX \geq 300$) & $2.167$ & $3.500$ & $3.000$ & $4.167$ & $4.333$ & $3.833$ \\ 
                        Mean rank (all) & $2.688$ & $3.250$ & $2.875$ & $4.219$ & $4.281$ & $3.688$ \\
					\bottomrule
				\end{tabular}
			\end{sc}
		\end{small}
	\end{center}
	% \vskip -1.0in
\end{table*}

\begin{table*}[h]
	\caption{Accuracies of frequentist time series classification methods}
	\label{table:freq_acc_results}
	\vskip 0.15in
	\begin{center}
		\begin{small}
			\begin{sc}
				\begin{tabular}{lrrrrrrrr}%{lllll}
					\toprule
					Dataset & SMTS & LPS & mvARF & DTW & ARKernel & gRSF & MLSTMFCN & MUSE \\
					\midrule
					Arabic Digits & $0.964$ & $0.971$ & $0.952$ & $0.908$ & $0.988$ & $0.975$ & $0.990$ & $0.992$ \\
					AUSLAN & $0.947$ & $0.754$ & $0.934$ & $ 0.727$ & $0.918$ & $0.955$ & $0.950$ & $0.970$ \\
					Character Traj. & $0.992$ & $0.965$ & $0.928$ & $0.948$ & $0.900$ & $0.994$ & $0.990$ & $0.937$ \\
					CMUsubject16 & $0.997$ & $1.000$ & $1.000$ & $0.930$ & $1.000$ & $1.000$ & $1.000$ & $1.000$ \\
					DigitShapes & $1.000$ & $1.000$ & $1.000$ & $1.000$ & $1.000$ & $1.000$ & $1.000$ & $1.000$ \\
					ECG & $0.818$ & $0.820$ & $0.785$ & $0.790$ & $0.820$ & $0.880$ & $0.870$ & $0.880$ \\
					Jap.~Vowels & $0.969$ & $0.951$ & $0.959$ & $0.962$ & $0.984$ & $0.800$ & $1.000$ & $0.976$ \\
					Kick vs Punch & $0.820$ & $0.900$ & $0.976$ & $0.600$ & $0.927$ & $1.000$ & $0.900$ & $1.000$ \\
					LIBRAS & $0.909$ & $0.903$ & $0.945$ & $0.888$ & $0.952$ & $0.911$ & $0.970$ & $0.894$ \\
					NetFlow & $0.977$ & $0.968$ & NA & $0.976$ & NA & $0.914$ & $0.950$ & $0.961$ \\
					PEMS & $0.896$ & $0.844$ & NA & $0.832$ & $0.750$ & $1.000$ & NA & NA \\
					PenDigits & $0.917$ & $0.908$ & $0.923$ & $0.927$ & $0.952$ & $0.932$ & $0.970$ & $0.912$ \\
					Shapes & $1.000$ & $1.000$ & $1.000$ & $1.000$ & $1.000$ & $1.000$ & $1.000$ & $1.000$   \\
					UWave & $0.941$ & $0.980$ & $0.952$ & $0.916$ & $0.904$ & $0.929$ & $0.970$ & $0.916$ \\
					Wafer & $0.965$ & $0.962$ & $0.931$ & $0.974$ & $0.968$ & $0.992$ & $0.990$ & $0.997$ \\
					Walk vs Run & $1.000$ & $1.000$ & $1.000$ & $1.000$ & $1.000$ & $1.000$ & $1.000$ & $1.000$ \\
					\midrule
					Mean acc. & $0.945$ & $0.933$ & $0.949$ & $0.899$ & $0.938$ & $0.955$ & $0.970$ & $0.962$ \\
                    Med.~acc. & $0.964$ & $0.964$ & $0.952$ & $0.929$ & $0.952$ & $0.984$ & $0.990$ & $0.976$ \\
                    Sd.~acc. & $0.059$ & $0.073$ & $0.055$ & $0.111$ & $0.073$ & $0.058$ & $0.039$ & $0.043$ \\
					\bottomrule
				\end{tabular}
			\end{sc}
		\end{small}
	\end{center}
	% \vskip -1.0in
\end{table*}

\clearpage

\begin{figure*}[t]
    \centering
    \begin{minipage}{0.29\textwidth}
        \centering
        \includegraphics[height=1.85in]{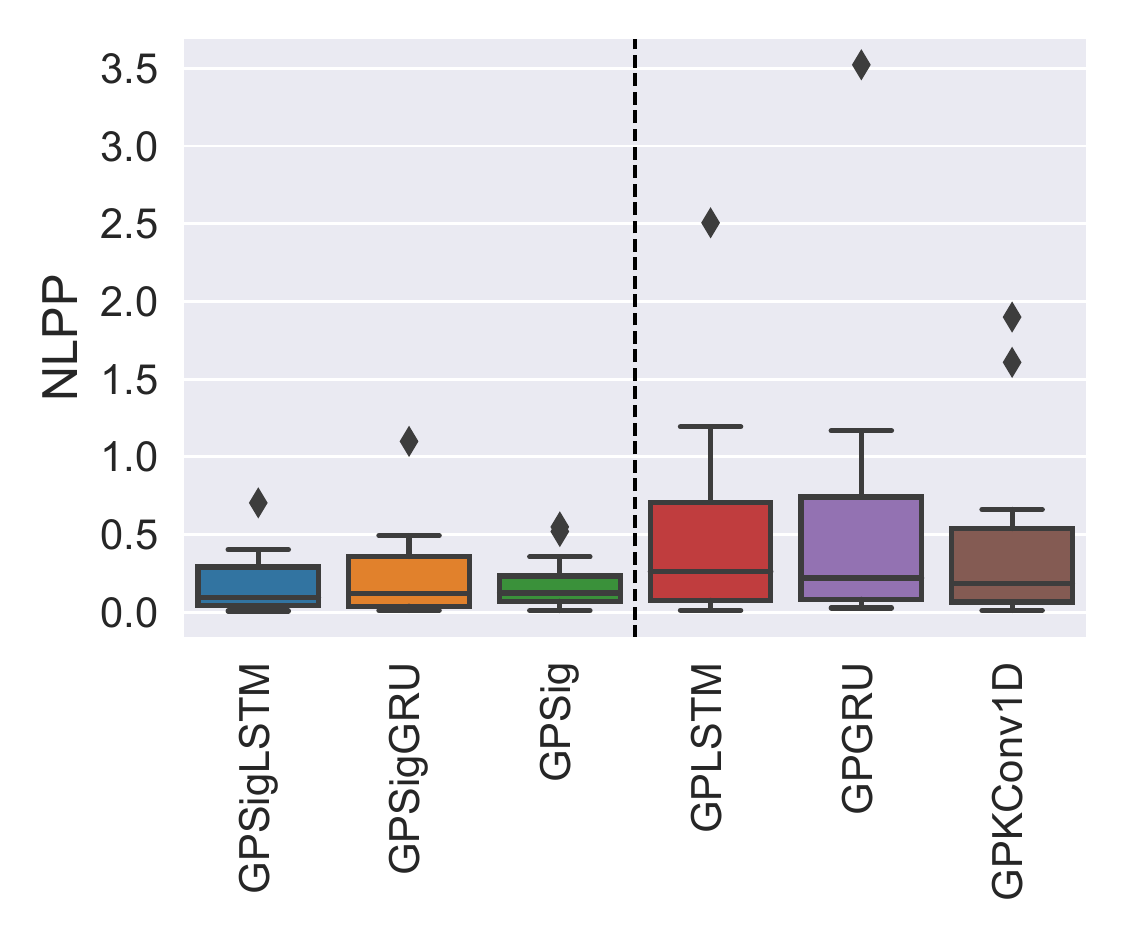}
    \end{minipage}
    \begin{minipage}{0.69\textwidth}
        \centering
        \includegraphics[height=1.85in]{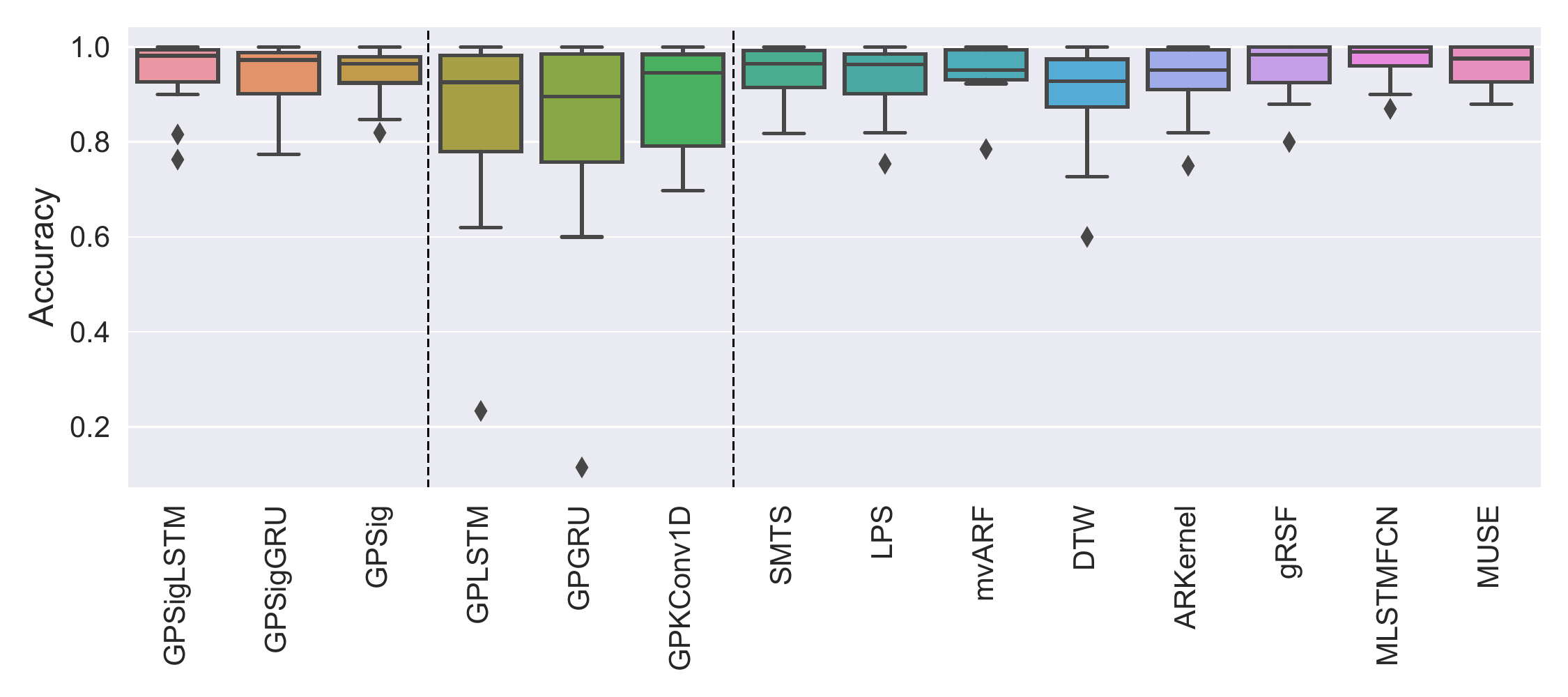}
    \end{minipage}
    \caption{Box-plots of negative log-predictive probabilities (left) and classification accuracies (right) on 16 TSC datasets}
    \label{fig:boxplots}
\end{figure*}

\clearpage

% \putbib
% \end{bibunit}
 
\end{document}